\def\eqref#1{equation~\ref{#1}}
\def\1{\bm{1}}
\DeclareMathAlphabet{\mathsfit}{\encodingdefault}{\sfdefault}{m}{sl}
\SetMathAlphabet{\mathsfit}{bold}{\encodingdefault}{\sfdefault}{bx}{n}
\newcommand{\E}{\mathbb{E}}
\DeclareMathOperator*{\argmin}{arg\,min}
\newtheorem{thmprop}{Proposition}
\newtheorem{thmasmp}{Assumption}
\newtheorem*{thmidasmp*}{Identifying assumptions}
\theoremstyle{definition}
\newtheorem*{thmrem*}{Remark}
\newtheorem*{thmprop*}{Proposition}
\newenvironment{proofsketch}{%
  \proof}{\endproof}
\def\E{\mathbb{E}}
\def\hf{\hat{f}}
\def\hg{\hat{g}}
\def\hh{\hat{h}}
\def\bm{{\bf m}}
\def\bbR{\mathbb{R}}
\def\cm{\mathcal{m}}
\def\cF{\mathcal{F}}
\def\cG{\mathcal{G}}
\def\cH{\mathcal{H}}
\def\cS{\mathcal{S}}
\def\cT{\mathcal{T}}
\def\cW{\mathcal{W}}
\def\cX{\mathcal{X}}
\def\cY{\mathcal{Y}}
\title{Unsupervised Domain Adaptation \\ by Learning Using Privileged Information}
\author{\name Adam Breitholtz \email adambre@chalmers.se \\
      \addr Department of Computer Science\\
       Chalmers University of Technology
      \AND
      \name Anton Matsson \email antmats@chalmers.se \\
      \addr Department of Computer Science\\
      Chalmers University of Technology
      \AND
      \name Fredrik D. Johansson \email fredrik.johansson@chalmers.se\\
      \addr Department of Computer Science\\
      Chalmers University of Technology
      }
\begin{document}

\maketitle

\begin{abstract}
   Successful unsupervised domain adaptation is guaranteed only under strong assumptions such as covariate shift and overlap between input domains. The latter is often violated in high-dimensional applications like image classification which, despite this limitation, continues to serve as inspiration and benchmark for algorithm development. In this work, we show that training-time access to side information in the form of auxiliary variables can help relax restrictions on input variables and increase the sample efficiency of learning at the cost of collecting a richer variable set. As this information is assumed available only during training, not in deployment, we call this problem unsupervised domain adaptation by learning using privileged information (DALUPI). To solve this problem, we propose a simple two-stage learning algorithm, inspired by our analysis of the expected error in the target domain, and a practical end-to-end variant for image classification. We propose three evaluation tasks based on classification of entities in photos and anomalies in medical images with different types of available privileged information (binary attributes and single or multiple regions of interest). We demonstrate across these tasks that using privileged information in learning can reduce errors in domain transfer compared to baselines, be robust to spurious correlations in the source domain, and increase sample efficiency.
\end{abstract}
\section{Introduction}
\label{sec:introduction}

Deployment of machine learning (ML) systems relies on generalization from training samples to new instances in a target domain. When these new instances differ in distribution from the source of training data, performance tends to degrade and guarantees are often weak. For example, a supervised ML model trained to identify medical conditions in X-ray images from one hospital may work poorly in another hospital if the two sites have different equipment or examination protocols~\citep{zech2018variable}. In the \emph{unsupervised domain adaptation} (UDA) problem~\citep{ben2006analysis}, \emph{no} labeled examples are available from the target domain and strong assumptions are needed for success. In this work, we ask: How can access to \emph{auxiliary variables} during training help solve the UDA problem and weaken the assumptions necessary to guarantee domain transfer?

In standard UDA, a common assumption is that the object of the learning task is identical in source and target domains but that input distributions differ~\citep{shimodaira2000improving}. This ``covariate shift'' assumption is plausible in our X-ray example above: Doctors are likely to give the same diagnosis based on X-rays of the same patient from similar but different equipment. 
However, guarantees for consistent domain adaptation also require either distributional overlap between inputs from source and target domains or known parametric forms of the labeling function~\citep{ben2012hardness,wu2019domain,johansson2019support}. Without these, adaptation cannot be verified or guaranteed by statistical means.

Input domain overlap is implausible for the high-dimensional tasks that have become standard benchmarks in the UDA community, including image classification~\citep{long2013,ganin_domain-adversarial_2016} and sentence labeling~\citep{orihashi_unsupervised_2020}. If hospitals have different X-ray equipment, the probability of observing (near-)identical images from source and target domains is zero~\citep{zech2018variable}. Even when covariate shift and overlap are satisfied, large domain differences can have a dramatic effect on sample complexity~\citep{breitholtz2022practicality}. Despite promising developments~\citep{shen2022}, realistic guarantees for practical domain transfer remain elusive.

In supervised ML without domain shift, incorporating auxiliary variables in the training of models has been proposed to improve out-of-sample generalization. For example, learning using \emph{privileged information}~\citep{vapnik_new_2009,lopez-paz_unifying_2016}, variables available during training but unavailable in deployment, has been proven to require fewer examples compared to learning without these variables~\citep{karlsson_lupts}. In X-ray classification, privileged information (PI) can come from graphical annotations or clinical notes made by radiologists that are unavailable when the system is used. While PI has begun to see use in domain adaptation, see e.g., \citet{Sarafianos_2017_ICCV} or \citet{DADA2019}, and a theoretical analysis exists for linear classifiers~\citep{xie2020n}, the literature has yet to fully characterize the benefits of this practice.

We introduce \emph{unsupervised domain adaptation by learning using privileged information} (DALUPI), in which auxiliary variables, related to the outcome of interest, are leveraged during training to improve test-time adaptation when the variables are unavailable. We summarize our contributions below: 
\begin{itemize}
\item We formalize the DALUPI problem and give conditions under which it is possible to solve it consistently, i.e., to learn a model using privileged information that predicts optimally in the target domain. Importantly, these conditions do not rely on distributional overlap between source and target domains in the input variable (Section~\ref{sec:analysis}), making consistent learning without privileged information (PI) generally infeasible. 
\item We propose practical learning algorithms for image classification in the DALUPI setting (Section~\ref{sec:method}), designed to handle problems with three different types of PI, see Figure~\ref{fig:illustration_examples} for examples. As common UDA benchmarks lack auxiliary variables related to the learning problem, we propose three new evaluation tasks spanning the three types of PI using data sets with real-world images and auxiliary variables.
\item On these tasks, we compare our methods to supervised learning baselines and well-known methods for unsupervised domain adaptation (Section~\ref{sec:experiments}). We find that our proposed models perform favorably to the alternatives for all types of PI, particularly when input overlap is violated and when training sets are small.
\end{itemize}

%
%
\section{Privileged Information in Domain Adaptation}
\label{sec:setup}
\label{sec:uda_setup}
In unsupervised domain adaptation (UDA), the goal is to learn a hypothesis $h$ to predict outcomes (or labels) $Y \in \cY$ for problem instances represented by input covariates $X \in \cX$, drawn from a target domain with density $\cT(X,Y)$. During training, we have access to labeled samples $(x, y)$ only from a source domain $\cS(X,Y)$ and unlabeled samples $\tilde{x}$ from $\cT(X)$. As a running example, we think of $\cS$ and $\cT$ as radiology departments at two different hospitals, of $X$ as the X-ray image of a patient, and of $Y$ as the diagnosis made by a radiologist after analyzing the image.

We aim to learn a hypothesis $h \in \cH$ from a hypothesis set $\cH$ that minimizes the expected target-domain prediction error (risk) $R_\cT$, with respect to a loss function $L : \cY \times \cY \rightarrow \bbR_+$, i.e., to solve
\begin{equation}\label{eq:risk}
\min_{h \in \cH} R_\cT(h) \mbox{, }\;\; R_\cT(h) \coloneqq \E_{X,Y\sim \cT}[L(h(X), Y)]~.
\end{equation}
A consistent solution to the UDA problem returns a minimizer of Equation \ref{eq:risk} without ever observing labeled samples from $\cT$. However, if $\cS$ and $\cT$ are allowed to differ arbitrarily, finding such a solution cannot be guaranteed~\citep{ben2012hardness}. To make the problem feasible, we assume that \emph{covariate shift}~\citep{shimodaira2000improving} holds---that the labeling function is the same in both domains, but the covariate distributions differ.

\begin{thmasmp}[Covariate shift] \label{asmp:covshift}%
For domains $\cS, \cT$ on $\cX \times \cY$, \emph{covariate shift} holds w.r.t. $X$ if
$$
\exists x : \cT(x) \neq \cS(x) \;\;\mbox{and}\;\; \forall x : \cT(Y \mid x) = \cS(Y \mid x)~.
$$%
\end{thmasmp}

\begin{figure}[t]
    \centering
    \includegraphics[width=0.7\columnwidth]{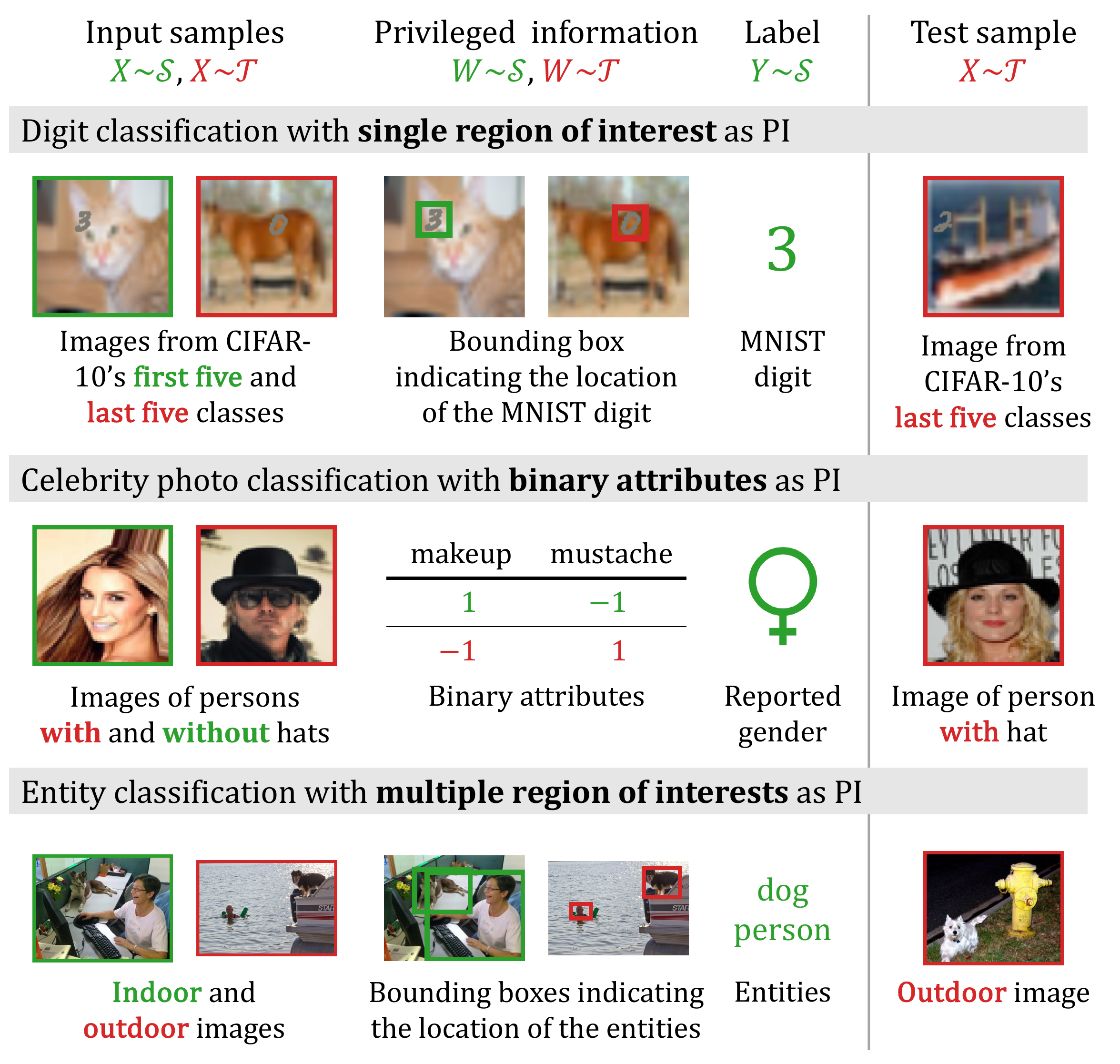}
    \caption{Examples of domain adaptation tasks with different types of privileged information (PI). During training, input samples $X$ and PI $W$ are drawn from both source and target domains. Labels $Y$ are only available from the source domain. At test time, a target sample $X$ is observed. We consider three types of PI: binary attribute vectors, a single region of interest, and multiple regions of interest. 
\label{fig:illustration_examples}}
\end{figure}%

In our example, covariate shift means that radiologists at either hospital would diagnose two patients with the same X-ray in the same way, but that the radiologists may encounter different distributions of patients and images. To guarantee consistent learning without further assumptions, these distributions cannot be \emph{too} different---the source domain input $\cS(x)$ must sufficiently \emph{overlap} the target input domain $\cT(x)$.
\begin{thmasmp}[Domain overlap]\label{asmp:overlap} A domain $\cS$ overlaps another domain $\cT$ w.r.t. $X$ on $\cX$ if
$$
\forall x \in \cX : \cT(X=x) > 0 \implies \cS(X=x) > 0~.
$$%
\end{thmasmp}
Covariate shift and domain overlap w.r.t. $X$ guarantee that the target risk $R_\cT$ can be identified by the sampling distribution described above, and thus, that a solution to Equation \ref{eq:risk} may be found. Hence, they have become standard assumptions, used by most informative guarantees~\citep{zhao2019learning}. 

Overlap is often violated in high-dimensional problems such as image classification, partly due to irrelevant information that has a spurious association with the label $Y$~\citep{beery2018recognition,d2021overlap}. 
In X-ray classification, it may be possible to perfectly distinguish hospitals (domains) based on protocol or equipment differences manifesting in the images~\citep{zech2018variable}. There are no guarantees for optimal UDA in this case. Some guarantees based on distributional distances do not rely on overlap~\citep{ben2006analysis,long2013}, but do not guarantee optimal learning either~\citep{johansson2019support}. 

Still, an image $X$ may \emph{contain} information $W$ which is both \emph{sufficient for prediction} and \emph{supported in both domains}. For X-rays, this could be a region of pixels indicating a medical condition, ignoring parts that merely indicate differences in protocol~\citep{zech2018variable}. The learner does not know how to find this information a priori, but it can be supplied during training as added supervision. A radiologist could indicate regions of interest $W$ using bounding boxes during training~\citep{irvin2019chexpert}, but would not be available to annotate images at test time. As such, $W$ is \emph{privileged information}~\citep{vapnik_new_2009}.
%
%
\subsection{Unsupervised Domain Adaptation With Privileged Information}\label{sec:pi_setup}
\label{sec:analysis}
Learning using privileged information, variables that are available only during training but not at test time, has been shown to improve sample efficiency in diverse settings~\citep{vapnik_learning_2015,pechyony_pi,jung2022efficient}. 
Next, we show that privileged information can also improve UDA by providing \emph{identifiability}
of the target risk---allowing it to be computed from the sampling distribution---even when overlap is not satisfied in $X$.

We define domain adaptation by learning using privileged information (DALUPI) as follows. During training, learners observe samples of covariates $X$, labels $Y$ and privileged information $W\in\mathcal{W}$ from $\cS$ in a dataset $D_\cS = \{(x_i, w_i, y_i)\}_{i=1}^m$, as well as samples of covariates and privileged information from $\cT$, $D_\cT = \{(\tilde{x}_i, \tilde{w}_i)\}_{i=1}^n$. \emph{At test time, trained models only observe covariates $\tilde{x} \sim \cT(X)$ and our learning goal remains to minimize the target risk, see Equation~\ref{eq:risk}}. 
We justify access to privileged information from $\cT$, but not labels, by pointing out that it is often easier to annotate observations with privileged information $W$ than with labels $Y$. For example, a non-expert may be able to reliably recognize the outline of an animal in an image, indicating the pixels $W$ corresponding to it, but not identify its species ($Y$); see Figure~\ref{fig:cat_illustration}, where it would likely be easier to identify the location of the cat in the image than to identify its breed.  

\def\cm{\checkmark}
\def\no{}
\begin{table}[t]
\caption{A summary of the different settings we consider in this work, what data is assumed to be available during training and if guarantees for identification are known for the setting under the assumptions of Proposition~\ref{prop:identification}. The parentheses around source samples for DALUPI indicate that we need not necessarily observe these for the setting. Note that at test time only $x$ from $\cT$ is observed. $^*$Under the more generous assumption of overlapping support in the input space $\cX$, guarantees exist for all these settings.}
\label{tab:settings}
\centering
\begin{tabular}{lccccccc}
\toprule
Setting &\multicolumn{3}{c}{Observed $\cS$} & \multicolumn{3}{c}{Observed $\cT$} & Guarantee\\
& $x$ & $w$ & $y$ & $\tilde{x}$ & $\tilde{w}$ & $\tilde{y}$ & for $R_\cT$\\  
\midrule
SL-T & \no & \no & \no & \cm & \no & \cm & \cm \\%
\midrule
SL-S & \cm & \no & \cm & \no & \no & \no & $^*$ \\%
UDA        & \cm & \no & \cm & \cm & \no & \no & $^*$ \\%
LUPI       & \cm & \cm & \cm & \no & \no & \no & $^*$ \\%
DALUPI    & (\cm) & \cm & \cm & \cm & \cm & \no & \cm \\%
\bottomrule
\end{tabular}
\end{table}

To identify $R_\cT$ (Equation \ref{eq:risk}) without overlap in $X$, we make the assumption that $W$ is sufficient to predict $Y$ in the following sense.
\begin{thmasmp}[Sufficiency of privileged information]\label{asmp:sufficiency}
Privileged information $W$ is sufficient for the outcome $Y$ given covariates $X$ if $Y \perp X \mid W$ in both $\cS$ and $\cT$.
\end{thmasmp}

Assumption~\ref{asmp:sufficiency} is satisfied when $X$ provides no more information about $Y$ in the presence of $W$. If we consider $W$ to be a subset of image pixels corresponding to an area of interest, the other pixels in $X$ may be unnecessary to predict $Y$. This is illustrated in Figure \ref{fig:cat_illustration} where the privileged information $w_i$ is the region of interest indicated by the bounding box $t_i$. Here, overlap is more probable in $\cW$ than in $\mathcal{X}$, as the extracted pixels mostly show cats. Moreover, when $W$ retains more information, sufficiency becomes more plausible but domain overlap in $W$ is reduced.

Assumptions \ref{asmp:covshift}--\ref{asmp:overlap} holding with respect to privileged information $W$ instead of $X$, along with Assumption~\ref{asmp:sufficiency}, allows us to identify the target risk even for models $h \in \cH$ that do not use $W$ as input:

\begin{thmprop}\label{prop:identification}
Let Assumptions~\ref{asmp:covshift} and \ref{asmp:overlap} be satisfied w.r.t. $W$ (not necessarily w.r.t. $X$) and let Assumption~\ref{asmp:sufficiency} hold as stated. Then, the target risk $R_\cT$ is identified for hypotheses $h : \cX \rightarrow \cY$, 
\begin{align*}
R_\cT(h) & = \sum_{x} \cT(x) \sum_w \cT(w\mid x) \sum_{y} \cS(y\mid w) L(h(x), y)~,
\end{align*}%
and for $L$ the squared loss, a minimizer of $R_\cT$ is 
$
h_\cT^*(x)  = \sum_{w}\cT(w \mid x) \E_\cS[Y\mid w]~.
$
\end{thmprop}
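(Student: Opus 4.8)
The plan is to expand the target risk by its definition, insert the privileged variable $W$ by marginalization, and then apply the three assumptions in sequence to rewrite every conditional so that it refers only to quantities the sampling scheme identifies. Concretely, I would start from $R_\cT(h) = \sum_{x,y}\cT(x,y)L(h(x),y) = \sum_x \cT(x)\sum_y \cT(y\mid x)L(h(x),y)$ and concentrate all the work on the factor $\cT(y\mid x)$.

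For that factor, first write $\cT(y\mid x) = \sum_w \cT(w\mid x)\,\cT(y\mid w,x)$. Assumption~\ref{asmp:sufficiency} ($Y\perp X\mid W$ in $\cT$) collapses $\cT(y\mid w,x)$ to $\cT(y\mid w)$. Next, Assumption~\ref{asmp:overlap} applied w.r.t. $W$ guarantees $\cS(w)>0$ wherever $\cT(w)>0$, so $\cS(y\mid w)$ is well defined on the support that matters, and Assumption~\ref{asmp:covshift} w.r.t. $W$ lets me substitute $\cT(y\mid w)=\cS(y\mid w)$ there. Putting this back gives $\cT(y\mid x)=\sum_w \cT(w\mid x)\,\cS(y\mid w)$, and substituting into the risk yields the displayed formula. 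The conclusion I would then emphasize is that the right-hand side involves only $\cT(x)$, $\cT(w\mid x)$ (both estimable from the unlabeled target sample $D_\cT$) and $\cS(y\mid w)$ (estimable from the labeled source sample $D_\cS$), with no appeal to overlap in $X$ and no target labels.

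For the minimizer under squared loss I would argue pointwise in $x$: $R_\cT(h)=\sum_x \cT(x)\,\E_{Y\sim q_x}\!\big[(h(x)-Y)^2\big]$ where $q_x(y):=\sum_w\cT(w\mid x)\cS(y\mid w)$, which by the identity just derived equals $\cT(y\mid x)$. Since the values $h(x)$ at distinct $x$ can be chosen independently (ranging over all measurable functions $\cX\to\cY$), the risk is minimized by taking $h(x)$ equal to the conditional mean $\E_{Y\sim q_x}[Y]$ wherever $\cT(x)>0$; expanding this mean and exchanging the order of summation gives $\sum_w\cT(w\mid x)\E_\cS[Y\mid w]=h_\cT^*(x)$.

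I do not anticipate a real obstacle: the argument is a short chain of conditional-probability manipulations. The one spot that needs care is the application of covariate shift — it must be invoked only on $\{w:\cT(w)>0\}$, which is exactly what Assumption~\ref{asmp:overlap} w.r.t. $W$ secures, so that $\cS(\,\cdot\mid w)$ exists where it is used. I would also note explicitly that the stated $h_\cT^*$ is the minimizer over all functions $\cX\to\cY$, hence the minimizer over $\cH$ precisely when $\cH$ contains it; and that if the paper intends densities rather than probability mass functions, the same steps carry over verbatim with integrals replacing sums and the usual null-set caveats.
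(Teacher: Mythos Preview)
Your proposal is correct and matches the paper's own proof essentially step for step: both marginalize over $W$, apply sufficiency to drop $X$ from $\cT(y\mid w,x)$, then invoke overlap and covariate shift in $W$ to replace $\cT(y\mid w)$ by $\cS(y\mid w)$, and derive the minimizer from the standard fact that the squared-loss Bayes predictor is the conditional mean. Your explicit care about where overlap is needed and the remark that $h_\cT^*$ is the minimizer over all functions (hence over $\cH$ only when $\cH$ contains it) are welcome clarifications that the paper leaves implicit.
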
%
\begin{proofsketch}%
$R_\cT(h) = \sum_{x,y}\cT(x, y) L(h(x), y)$. We can then marginalize over $W$ to get
$\cT(x, y) = \cT(x) \E_{\cT(W\mid x)}[\cT(y\mid W) \mid x] = \cT(x) \sum_{w : \cS(w)>0} \cT(w \mid x) \cS(y\mid w)$, where the first equality follows by sufficiency and the second by covariate shift and overlap in $W$. $\cT(x), \cT(w \mid x)$ and $\cS(y\mid w)$ are observable through training samples. That $h^*_\cT$ is a minimizer follows from the first-order condition. 
See Appendix~\ref{app:identifiability}.
\end{proofsketch} 


\begin{figure}[t]
    \centering
    \includegraphics[width=.8\columnwidth]{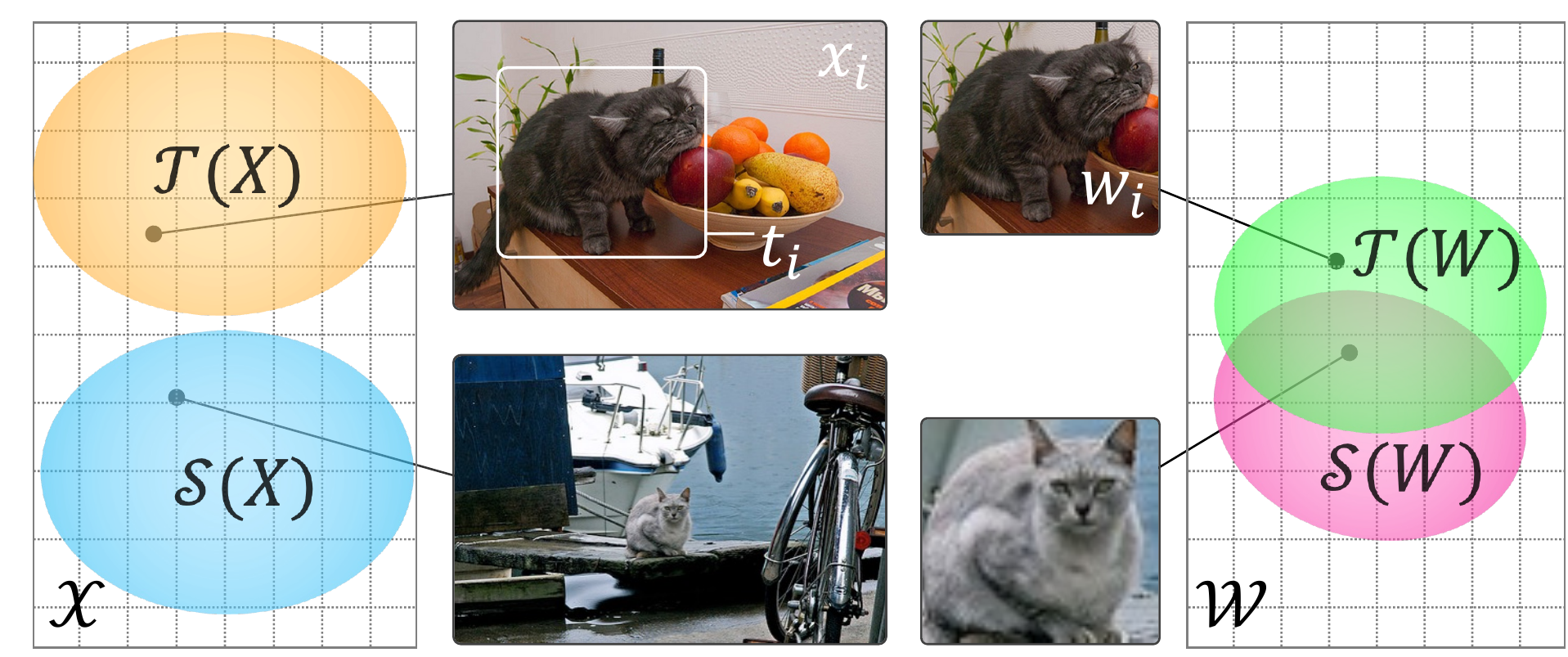}
    \caption{An illustration of domain overlap being more plausible when we consider appropriate forms of privileged information $W$, such as a region of interest of an image. Source and target domains $\cS, \cT$ are here indoor and outdoor images $X$ and the task is to identify the animal $Y$ in the image.\label{fig:cat_illustration}}
\end{figure}
Proposition~\ref{prop:identification} shows that there are conditions where privileged information allows for identification of target-optimal hypotheses where identification is not possible without it, i.e., when overlap is violated in $X$. $W$ guides the learner toward the information in $X$ that is relevant for the label $Y$. 
When $W$ is deterministic in $X$, overlap in $X$ implies overlap in $W$, but not vice versa. In the same case, under Assumption~\ref{asmp:sufficiency}, if covariate shift holds for $X$, it holds also for $W$. Hence, if sufficiency can be justified, the requirements on $X$ are weaker than in standard UDA, at the cost of collecting $W$. Surprisingly, Proposition~\ref{prop:identification} does not require that $X$ is observed in the source domain as the result does not depend on $\cS(x)$.

Figure~\ref{fig:illustration_examples} gives examples of problems with the DALUPI structure which we consider in this work. For comparison, we list related learning paradigms in Table~\ref{tab:settings}. Supervised learning (SL-S) refers to learning from labeled samples from $\cS$ without privileged information. SL-T refers to supervised learning with (infeasible)  access to labeled samples from $\cT$. UDA refers to the setting at the start of Section~\ref{sec:uda_setup} and LUPI to learning using privileged information without data from $\cT$~\citep{vapnik_new_2009}. We compare DALUPI to these alternative settings in our experiments in Section \ref{sec:experiments}.

\subsection{A Two-stage Algorithm and Its Risk}
\label{sec:twostage}
In light of Proposition~\ref{prop:identification}, a natural learning strategy is to model privileged information as a function of the input, $\cT(W \mid x)$, and the outcome as a function of privileged information, $\hg(w) \approx \E_\cS[Y\mid w]$, and combining these. In the case where $W$ is a deterministic function of $X$, $\cT(W \mid x)$ is a map $f : \cX \rightarrow \cW$, which may be estimated as a regression $\hf$ and combined with the outcome regression to form $\hh = \hg(\hf(X))$. We may find such functions $\hf$, $\hg$ by separately minimizing the empirical risks
\begin{equation}\label{eq:twostage_erm}
\begin{aligned}
\hat{R}^{W}_{\cT}(f) &= \frac{1}{n}\sum_{i=1}^{n}L(f(\tilde{x}_i), \tilde{w}_i) 
\;\;\;\mbox{ and }\;\; 
\hat{R}^{Y}_{\cS}(g) &= \frac{1}{m} \sum_{i=1}^{m}L(g(w_i), y_i)~.\\
\end{aligned}
\end{equation}
Hypothesis classes $\cF, \cG$ may be chosen so that $\cH = \{h = g \circ f ; (f,g) \in \cF \times \cG\}$ has a desired form.

We can bound the generalization error of estimators $\hat{h} = \hat{g} \circ \hat{f}$ when $W \in \mathbb{R}^{d_W}$ and $L$ is the squared loss by placing an assumption of Lipschitz smoothness on the space of prediction functions: $\forall g\in \cG, w, w' \in \mathcal{W} :  \|g(w)-g(w')\|_2 \leq M \|w-w'\|_2$. To arrive at a bound, we first define the $\rho$-weighted empirical risk of the outcome model $g$ in the source domain, 
$
\hat{R}^{Y,\rho}_\cS(g) = \frac{1}{m} \sum_{i=1}^m \rho(w_i)L(g(w_i), y_i)
$
where $\rho$ is the density ratio of $\cT$ and $\cS$, $\rho(w)=\frac{\cT(w)}{\cS(w)}$.  When the density ratio $\rho$ is unknown, we may use density estimation~\citep{sugiyama_density_2012} or probabilistic classifiers to estimate it. We arrive at the following result, proven for univariate $Y$ but generalizable to multivariate outcomes.
\begin{thmprop}\label{prop:bound}
Suppose that $W$ and $Y$ are deterministic in $X$ and $W$, respectively, and that Assumptions~\ref{asmp:covshift}--\ref{asmp:sufficiency} hold w.r.t. $W$. Let $\cG$ comprise $M$-Lipschitz mappings $g : \cW \rightarrow \cY$ with $\mathcal{W} \subseteq \bbR^{d_W}$, and let the loss be the squared Euclidean distance, assumed to be uniformly bounded over $\cW$. 
Let $\rho(w) = \cT(w)/\cS(w)$ and $d$ and $d'$ be the pseudo-dimensions of $\mathcal{G}$ and $\mathcal{F}$, respectively. Assume that there are $m$ labeled samples from $\cS$ and $n$ unlabeled samples from $\cT$. Then, for any $h = g \circ f \in \cG \times \cF$, with probability at least $1-\delta$, 
\begin{align*}
\frac{R_\cT(h)}{2}  &\leq \hat{R}^{Y,\rho}_\cS(g) +M^2 \hat{R}^W_\cT(f) + \mathcal{O}\left( \sqrt[3/8]{\frac{d\log\frac{m}{d}+\log \frac{4}{\delta}}{m}}
 + \sqrt{\frac{d'\log\frac{n}{d'} + \log\frac{d_W}{\delta}}{n}}\right)~.
\end{align*}
\end{thmprop}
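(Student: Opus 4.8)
The plan is to prove the bound in three movements: a deterministic error decomposition that splits the composite hypothesis $h = g\circ f$ into a ``feature'' error and an ``outcome'' error; an importance-reweighting step that transports the outcome error from $\cT$ to $\cS$; and two uniform-convergence arguments that replace the resulting population risks by the empirical risks appearing in the statement.

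First I would use the two determinism hypotheses. Since $W$ is deterministic in $X$ there is a map $f^*:\cX\to\cW$ with $W = f^*(X)$, and since $Y$ is deterministic in $W$ there is $g^*:\cW\to\cY$ with $Y = g^*(W)$; covariate shift w.r.t. $W$ makes $g^*$ common to $\cS$ and $\cT$. For $X\sim\cT$ we then have $L(h(X),Y) = \|g(f(X)) - g^*(f^*(X))\|_2^2$, and inserting $g(f^*(X))$, applying the triangle inequality, the $M$-Lipschitzness of $g$, and $(a+b)^2\le 2a^2+2b^2$ gives
\[
L(h(X),Y)\;\le\;2M^2\|f(X)-f^*(X)\|_2^2\;+\;2\|g(f^*(X))-g^*(f^*(X))\|_2^2 .
\]
Taking $\E_{X\sim\cT}$ and noting that $f^*(X)\sim\cT(W)$ yields $\tfrac12 R_\cT(h)\le M^2 R^W_\cT(f) + R^Y_\cT(g)$, where $R^W_\cT(f) = \E_{X\sim\cT}[\|f(X)-f^*(X)\|_2^2]$ is the population version of $\hat R^W_\cT(f)$ and $R^Y_\cT(g) = \E_{W\sim\cT}[\|g(W)-g^*(W)\|_2^2]$. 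Because the labeling function is shared under covariate shift in $W$, importance weighting by $\rho = \cT/\cS$ turns the latter into $\E_{W\sim\cS}[\rho(W)\|g(W)-g^*(W)\|_2^2]$, the population counterpart of $\hat R^{Y,\rho}_\cS(g)$; note that, as in Proposition~\ref{prop:identification}, source samples of $X$ are never used.

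It then remains to control the two population risks by their empirical versions uniformly over $\cF$ and $\cG$. For $f$, the squared loss is bounded over $\cW$, so I would apply Pollard-type pseudo-dimension uniform convergence to $\cF$ coordinatewise and union-bound over the $d_W$ output coordinates with failure budget $\delta/2$, giving $R^W_\cT(f)\le \hat R^W_\cT(f) + \mathcal{O}\!\left(\sqrt{\tfrac{d'\log(n/d')+\log(d_W/\delta)}{n}}\right)$ uniformly. For $g$, $R^{Y,\rho}_\cS(g)$ is an \emph{importance-weighted} risk, so the appropriate tool is the importance-weighting generalization bound of Cortes, Mansour and Mohri: assuming a finite second moment of $\rho$ (equivalently a finite $\chi^2$-divergence of $\cT$ from $\cS$ in $W$, which is the regime that allows the weights to be unbounded), this yields the non-standard $\mathcal{O}\!\left(\left(\tfrac{d\log(m/d)+\log(4/\delta)}{m}\right)^{3/8}\right)$ rate with failure budget $\delta/2$. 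A final union bound and addition of the two deviation terms completes the proof, with the factor $\tfrac12$ exactly as stated.

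The main obstacle is this last weighted-risk step: bounding $\sup_{g\in\cG}|R^{Y,\rho}_\cS(g) - \hat R^{Y,\rho}_\cS(g)|$ when the importance weights may be unbounded is precisely what forces the truncation-and-balancing argument responsible for the $3/8$ exponent, and it is the only place a moment condition on $\rho$ enters. The decomposition and the $\cF$-side uniform convergence are otherwise routine, once one checks that the Lipschitz and boundedness hypotheses on $\cG$ make both the plain and the reweighted squared loss admissible inputs to the concentration results invoked.
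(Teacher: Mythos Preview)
Your proposal is correct and follows essentially the same route as the paper: the same add-and-subtract decomposition via $g(W)$ with Lipschitzness to obtain $\tfrac12 R_\cT(h)\le R^Y_\cT(g)+M^2 R^W_\cT(f)$, the same importance-weighting identity $R^Y_\cT(g)=R^{Y,\rho}_\cS(g)$ from covariate shift and overlap in $W$, the Cortes--Mansour--Mohri weighted-risk bound (the paper cites it as Theorem~3 of \citet{cortes2010}) for the $3/8$-rate term under a finite second moment of $\rho$, and a coordinatewise pseudo-dimension bound with a union over the $d_W$ outputs for the $\cF$ side. You even correctly isolate the one nontrivial ingredient---the unbounded-weights concentration responsible for the $3/8$ exponent---exactly as in the paper's proof.
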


\begin{proofsketch}
Decomposing the risk of $h \circ \phi$ , we get 
\begin{align*}
R_\cT(h) &= \E_\cT[(g(f(X)) - Y)^2] \\
& \leq 2\E_\cT[(g(W) - Y)^2 + (g(f(X)) - g(W))^2] \\
& \leq 2\E_\cT[(g(W) - Y)^2 + M^2\|f(X)) - g(W)\|^2] \\
 & \leq 2\E_\cT[(g(W) - Y)^2] +  2M^2\E_\cT[\|(f(X) - W)\|^2] \\
& = 2R^Y_\cT(g) +  2M^2R^W_\cT(f) =2R^{Y,\rho}_S(g) +  2M^2R^W_\cT(f)~.
\end{align*}
The first inequality follows the relaxed triangle inequality, the second from the Lipschitz property, and the third equality from Overlap and Covariate shift. Treating each component of $\hat{w}$ as independent, using standard PAC learning results, and application of Theorem 3 from \citet{cortes2010} along with a union bound argument, we get the stated result. See Appendix \ref{app:proof} for a more detailed proof. 
\end{proofsketch}

When $\cF$ and $\cG$ contain the ground-truth mappings between $X$ and $W$ and between $W$ and $Y$, in the infinite-sample limit, minimizers of Equation \ref{eq:twostage_erm} minimize $R_\cT$ as well. 
%
Our approach is not limited to classical PAC analysis but could, under suitable assumptions, be carried out under another framework, e.g. using PAC-Bayes analysis to obtain a bound that contains different sample complexity terms. However, such a bound would then hold in expectation over a posterior distribution on $\cH$ instead of uniformly over $\cH$. We sketch a proof of such a bound in Appendix~\ref{app:pacbayesproof}. 

Furthermore, if sufficiency is violated but it is plausible that the degree of insufficiency is comparable across domains, we can still obtain a bound on the target risk which may be estimated from observed quantities. We give such a result in Appendix~\ref{app:no_sufficiency}. 
%

%
%
\section{Image Classification With Privileged Information}
\label{sec:method}

We use image classification, where $X$ is an image and $Y$ is a discrete label, as proof of concept for DALUPI. To show the versatility of our approach, we consider three different instantiations of privileged information $W$: a binary attribute vector, a single region of interest, or multiple regions of interest. We detail each setting below and illustrate them in Figure \ref{fig:illustration_examples}. 

\subsection{Binary Attributes as PI}
First, we consider the case where each image $x_i$ is accompanied by privileged information in the form of a binary vector $w_i \in \{0,1\}^d$ indicating the presence of $d$ attributes in the image. 
In this setting, we can directly apply our two-stage estimator ( Equation \ref{eq:twostage_erm}). For the first estimator $\hat{f}$, we use a convolutional neural network (CNN) trained on observations from $\cT$ (and possibly $\cS$) to output a binary vector of attributes $\hat{w}_i$ from the input $x_i$. For the second estimator $\hat{g}$, we use a classifier, trained on source domain observations, that predicts the image label $\hat{y}_i$ given the vector of attributes $w_i$. We use the categorical cross-entropy loss to train both $\hf$ and $\hg$. The resulting classifier, $\hh(x)=\hg(\hf(x))$, is then evaluated on target domain images.

\subsection{Single Region of Interest as PI}
Next, we consider privileged information as a subset of pixels $w_i$, taken from the image $x_i$ and associated with an object or feature that determines the label $y_i \in \{1, \ldots, K\}$. In our experiments, this PI is provided as a \textit{single} bounding box with coordinates $t_i\in\bbR^{4}$ enclosing the region of interest $w_i$. 
Here, we use two CNNs, $\hat{d}$ and $\hat{g}$, and a deterministic function $\phi$ to approximate the two-stage estimator, see Equation~\ref{eq:twostage_erm}.
The network $\hat{d}$ is trained to output bounding box coordinates $\hat{t}_i$ as a function of the input $x_i$, and the pixels $\hat{w}_i$ within the bounding box are extracted from $x_i$ and resized to pre-specified dimensions through $\phi$.
The composition of these two functions, $\hf(x_i)=\phi(x_i,\hat{d}(x_i))$, returns $\hat{w}_i$. The second network $\hg$ is trained to predict $y_i$ given the pixels $w_i$ contained in a bounding box $t_i$ based on observations from $\cS$. We use the mean squared error loss for $\hat{d}$ and the categorical cross-entropy loss for $\hg$. Finally, $\hh(x)=\hg(\hf(x))$ is evaluated on target domain images where the output of $\hf$ is used for prediction with $\hg$. For further details see Appendix \ref{app:twostage}.

\subsection{Multiple Regions of Interest as PI}
\label{sec:endtoend}
Finally, we consider a setting where privileged information indicates \emph{multiple} regions of interest in an image. We use this PI in multi-label classification problems where the image $x_i$ is associated with one or more categories $k$ from a set $\{1, \ldots, K\}$, encoded in a multi-category label $y_i \in \{0,1\}^K$ (e.g., indicating findings of one or more diseases). The partial label $y_i(k) = 1$ indicates the presence of features or objects in the image from category $k$. In our entity classification experiment, an object $j$ of class $k \in [K]$ in the image, say ``Bird'', will be annotated by a bounding box $t_{ij}\in\bbR^{4}$ surrounding the pixels of the bird, and an object label $u_{ij} = k$. In X-ray classification, $t_{ij}$ can indicate an abnormality $j$ in the X-ray image, and $u_{ij} \in \{1, \ldots, K\}$ the label of the finding (e.g., ``Pneumonia''). 

To make full use of privileged information, we train a deep neural network $\hh(x) = \hg(\hat{f}(x))$, where $\hf$ produces \emph{a set} of bounding box coordinates $\hat{t}_{ij}$ and extracts the pixels $\hat{w}_{ij}$ associated with each $\hat{t}_{ij}$, and where $\hg$ predicts a label $\hat{u}_{ij}$ for each $\hat{w}_{ij}$. 
For this, we adapt the Faster R-CNN architecture~\citep{ren2016faster} which uses a region proposal network (RPN) to generate regions that are fed to a detection network for classification and refined bounding box regression. 

The RPN in combination with region of interest pooling serves as the sub-network $\hat{f}$, producing estimates $\hat{w}_i$ of the privileged information for an image $x_i$. Privileged information adds supervision through per-object losses $L_{\text{reg}}(\hat{t}, t)$ and $L_{\text{cls}}(\hat{p}, u)$ for region proposals $\hat{t}$ and class probabilities $\hat{p}$, respectively. In Appendix~\ref{app:faster-rcnn}, we provide details of the learning objective and architecture and describe small modifications to the training procedure of Faster R-CNN to accommodate the DALUPI setting. Unlike the two-stage estimator, we train Faster R-CNN (both $\hf$ and $\hg$) end-to-end, minimizing both losses at once. In entity classification experiments (see Table~\ref{tab:coco_results} and Figure~\ref{fig:cocoexp}), we also train this model in a LUPI setting, where \emph{no} information from the target domain is used, but privileged information from the source domain is used.

%
%
\section{Experiments} \label{sec:experiments}

We evaluate the empirical benefits of learning using privileged information, compared to the other data availability settings in Table~\ref{tab:settings}, across four UDA image classification tasks where PI is available in the forms described in Section \ref{sec:method}. Widely used datasets for UDA evaluation like OfficeHome~\citep{venkateswara2017deep} and large-scale benchmark suites like DomainBed~\citep{gulrajani2021in}, VisDA~\citep{peng2017visda} and WILDS~\citep{koh2021wilds} \emph{do not} include privileged information and cannot be used for evaluation here. Thus, we first compare our method to baselines on the recent CelebA task of~\citep{xie2020n} which includes PI in the form of binary attributes (Section \ref{sec:exp_celeb}). Additionally, we propose three new tasks based on well-known image classification data sets with regions of interest as PI (Section \ref{sec:exp_digit}--\ref{sec:chest}).

Our goal is to collect evidence that DALUPI improves adaptation bias and sample efficiency compared to methods that do not make use of PI. We choose baselines to illustrate these two disparate settings. First, we compare DALUPI to supervised learning baselines, SL-S and SL-T, trained on labeled examples from the source and target domain, respectively. SL-S is a simple but strong baseline: On benchmark suites like DomainBed and WILDS, there is still no UDA method that \emph{consistently} outperforms SL-S (ERM) without transfer learning \citep{gulrajani2021in, koh2021wilds}. SL-T serves as an oracle comparison since it uses labels from the target domain which are normally unavailable in UDA. Second, we compare DALUPI to two UDA methods---domain adversarial neural networks (DANN)~\citep{ganin_domain-adversarial_2016} and the margin disparity discrepancy (MDD)~\citep{zhang2019bridging}---which have theoretical guarantees but do not make use of PI. These baselines are all based on the ResNet architecture. In Section \ref{sec:exp_celeb}, we compare DALUPI also to In-N-Out \citep{xie2020n} which was designed to make use of auxiliary (privileged) attributes for training domain adaptation models. We do not include this model in other experiments as it was not designed to use regions of interest as privileged information. The exact architectures of all models and baselines are described in Appendix \ref{app:expdetails}, along with details on experimental setup and hyperparameters.

For each experimental setting, we train 10 models from each class using hyperparameters randomly selected from given ranges (see Appendix~\ref{app:expdetails}). For DANN and MDD, the trade-off parameter, which regularizes domain discrepancy in representation space, increases from 0 to 0.1 during training; for MDD, the margin parameter is set to 3. All models are evaluated on a held-out validation set from the source domain and the best-performing model in each class is then evaluated on held-out test sets from both domains. For SL-T, we use a held-out validation set from the target domain. We repeat this procedure over 5 or 10 seeds, controlling the data splits and the random number generation. We report accuracy and area under the ROC curve (AUC) with \SI{95}{\percent} confidence intervals computed by bootstrapping over the seeds.

\subsection{Celebrity Photo Classification With Binary Attributes as PI} \label{sec:exp_celeb}
For the case where privileged information is available as binary attributes, we follow \citet{xie2020n} who introduced a binary classification task based on the CelebA dataset~\citep{Liu2015}, where the goal is to predict whether the person in an image has been identified as male or female ($Y$) in one of the binary attributes that accompanies the data set's photos of celebrities ($X$). Like \citet{xie2020n}, we use 7 of the 40 other attributes (\verb|Bald|, \verb|Bangs|, \verb|Mustache|, \verb|Smiling|, \verb|5_o_Clock Shadow|, \verb|Oval_Face|, and \verb|Heavy_Makeup|) as a vector of privileged information $W \in \{0,1\}^7$. The target and source domains are defined by people wearing ($\cT$) and not wearing ($\cS$) a hat. More details can be found in Appendix \ref{app:celeb}.

Table \ref{tab:celeb_results} shows the target accuracy for each model. We see that DALUPI performs comparably to the In-N-Out models proposed by \citet{xie2020n}, while outperforming other baselines. Confidence intervals overlap for MDD, In-N-Out and DALUPI. The only variants of In-N-Out that achieve a higher average accuracy than DALUPI require four or more rounds of training to achieve their results (baseline, auxiliary input, auxiliary output pre-training, tuning and self-training)~\citep{xie2020n}. Both DALUPI and In-N-Out benefit from access to privileged information from both the source and target domain (pre/self-training for In-N-Out). 

Finally, it is worth noting that neither covariate shift, nor sufficiency are likely to hold w.r.t. to $W$ in this task. Specifically, photos with none of the 7 attributes active, $w = \mathbf{0}$, have different label rates and majority label in $\cS$ ($\bar{Y}=0.64$) and $\cT$ ($\bar{Y}=0.46$), i.e., covariate shift is violated. In addition, the best model we have found trained on $W$ alone achieves only \SI{65}{\percent} accuracy, compared to the results in Table~\ref{tab:celeb_results}---sufficiency is unlikely to hold. Thus, DALUPI is robust to violations of these assumptions.

\subsection{Digit Classification With Single Region of Interest as PI}
\label{sec:exp_digit}

We construct a synthetic image dataset, based on the assumptions of Proposition~\ref{prop:identification}, to verify that there are problems where DALUPI is guaranteed successful transfer but standard UDA is not. Starting from CIFAR-10~\citep{cifar10} images upscaled to $128\times128$, we insert a random $28 \times 28$ digit image from the MNIST dataset~\citep{lecun_gradient-based_1998}, with a label in the range 0--4, into a random location of each CIFAR-10 image, forming the input image $X$ (see Figure~\ref{fig:illustration_examples} (top) for examples). The label $Y \in \{0,\dots,4\}$ is determined by the MNIST digit. We store the bounding box around the inserted digit image and use the pixels contained within it as privileged information $W$ during training. 
To increase the difficulty of the task, we make the digit be the mean color of the dataset and make the digit background transparent so that the border of the image is less distinct. This may slightly violate Assumption~\ref{asmp:overlap} with respect to the region of interest $W$ since the backgrounds differ between domains. 

\begin{table}[t]
\begin{minipage}[c]{0.50\textwidth}%
    \centering
    \captionof{table}{Celebrity photo classification. DALUPI performs comparably to the In-N-Out models in \citet{xie2020n}. Note: In-N-Out results are reported as the average of 5 trials with \SI{90}{\percent} confidence intervals.}
    \label{tab:celeb_results}
    \begin{tabular}{lll}
    \toprule
    & Target accuracy \\ \midrule
    SL-S & 74.4 (73.1, 76.4) \\ 
    DANN & 73.9 (72.2, 76.5) \\ 
    MDD & 77.3 (75.2, 79.0) \\         
    In-N-Out (w/o pretraining) & 78.5  (77.2, 79.9)\\
    In-N-Out (w. pretraining) & 79.4  (78.7, 80.1)\\
    In-N-Out (rep. self-training) & 80.4  (79.7, 81.1)\\
    DALUPI ($W$ from $\cT$) & 78.3 (76.3, 80.3) \\ 
    DALUPI ($W$ from $\cS, \cT$)& 79.3 (76.9, 81.7) \\  
    \bottomrule
    \end{tabular}
\end{minipage}%
\hfill
\begin{minipage}{0.48\textwidth}%
    \centering
    \includegraphics[width=0.9\textwidth]{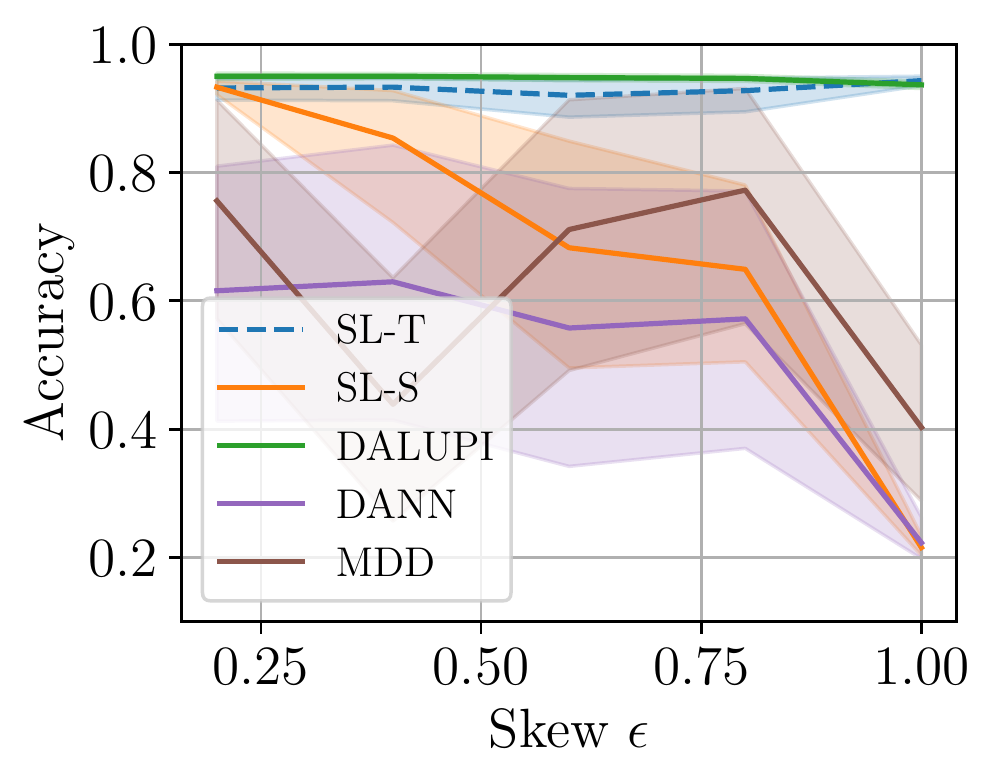}
    \captionof{figure}{Digit classification. Target domain accuracy as a function of association $\epsilon$ between background and label in $\cS$. As the skew increases, the target-domain performance of the non-privileged models deteriorates. \label{fig:mnistexp}}
\end{minipage}%
\end{table}

To understand how successful transfer depends on domain overlap and access to sufficient privileged information, we include a \emph{skew parameter} $\epsilon\in [\frac{1}{c},1]$, where $c=5$ is the number of digit classes, which determines the correlation between digits and backgrounds. For a source image $i$ with digit label $Y_i \in \{0,\ldots,4\}$, we select a random CIFAR-10 image with class $B_i \in\{0,\ldots,4\}$ with probability
$P(B_i=b \mid Y_i = y) = \left\{
\epsilon, \;\mbox{if}\; b = y; \;\;
(1-\epsilon)/(c-1), \mbox{otherwise}
\right\}$.
For target images, digits and backgrounds are matched uniformly at random. The choice $\epsilon=\frac{1}{c}$ yields a uniform distribution and $\epsilon=1$ is equivalent to the background carrying as much signal as the privileged information. We hypothesize that $\epsilon=1$ is the worst possible case where confusion of the model is likely, which would lead to poor adaptation under domain shift.

In Figure \ref{fig:mnistexp}, we observe the conjectured behavior. As the skew $\epsilon$ and the association between background and label increases, the performance of SL-S decreases rapidly on the target domain. At $\epsilon=1$, it performs no better than random guessing, likely because the model has learned to associate spurious features in the background with the label of the digit. 
%
We also observe that DANN and MDD deteriorate in performance with increased correlation between the label and the background. In contrast, DALUPI is unaffected by the skew as the subset of pixels extracted by $\hat{f}$ only carries some of the background with it, while containing sufficient information to make good predictions. Interestingly, DALUPI also seems to be as good or slightly better than the oracle SL-T in this setting. This may be due to improved sample efficiency from using PI.

%
\subsection{Entitity Classification With Multiple Regions of Interest as PI} \label{sec:coco}

Next, we consider multi-label classification of the presence of four types of entities (persons, cats, dogs, and birds) indicated by a binary vector $Y \in \{0,1\}^4$ for images $X$ from the MS-COCO dataset~\citep{lin2014microsoft}. PI is used to localize regions of interest $W$ related to the entities, provided as bounding box annotations. We define source and target domains $\cS$ and $\cT$ as indoor and outdoor images, respectively. Indoor images are extracted by filtering out images from the  MS-COCO super categories ``indoor'' and ``appliance'' that also contain at least one of the four main label classes. Outdoor images are extracted using the super categories ``vehicle'' and ``outdoor''. In total, there are \SI{5231} images in the source and \SI{5719} images in the target domain; the distribution of labels is provided in Appendix \ref{app:coco}. 

\begin{table}[t]
\begin{minipage}{0.48\textwidth}%
    \centering
        \captionof{table}{Entity classification. UDA models have access to all unlabeled target samples, LUPI to all PI (source), and DALUPI to all PI (source and target). \label{tab:coco_results}}%
        \begin{tabular}[b]{llll}
        \toprule
         & Source AUC & Target AUC \\ \midrule
         SL-T & 60.1	(58.7,	61.5) & 69.0 (68.1, 69.9) \\ \midrule
         SL-S & 69.5	(68.6,	70.4) & 63.0 (61.6, 64.2) \\ 
         DANN & 68.1	(67.5,	68.7) & 62.5 (61.9, 63.1) \\ 
         MDD & 62.4	(61.1,	63.9) & 57.7 (56.3, 59.2) \\
         LUPI & 69.3	(68.5,	70.1) & 65.9 (65.0, 66.8) \\
         DALUPI & 71.4	(70.3,	72.4) & 68.2 (66.3, 70.1) \\   
         \bottomrule
        \end{tabular}
\end{minipage}%
\hfill
\begin{minipage}[c]{0.50\textwidth}%
\centering
\includegraphics[width=\textwidth]{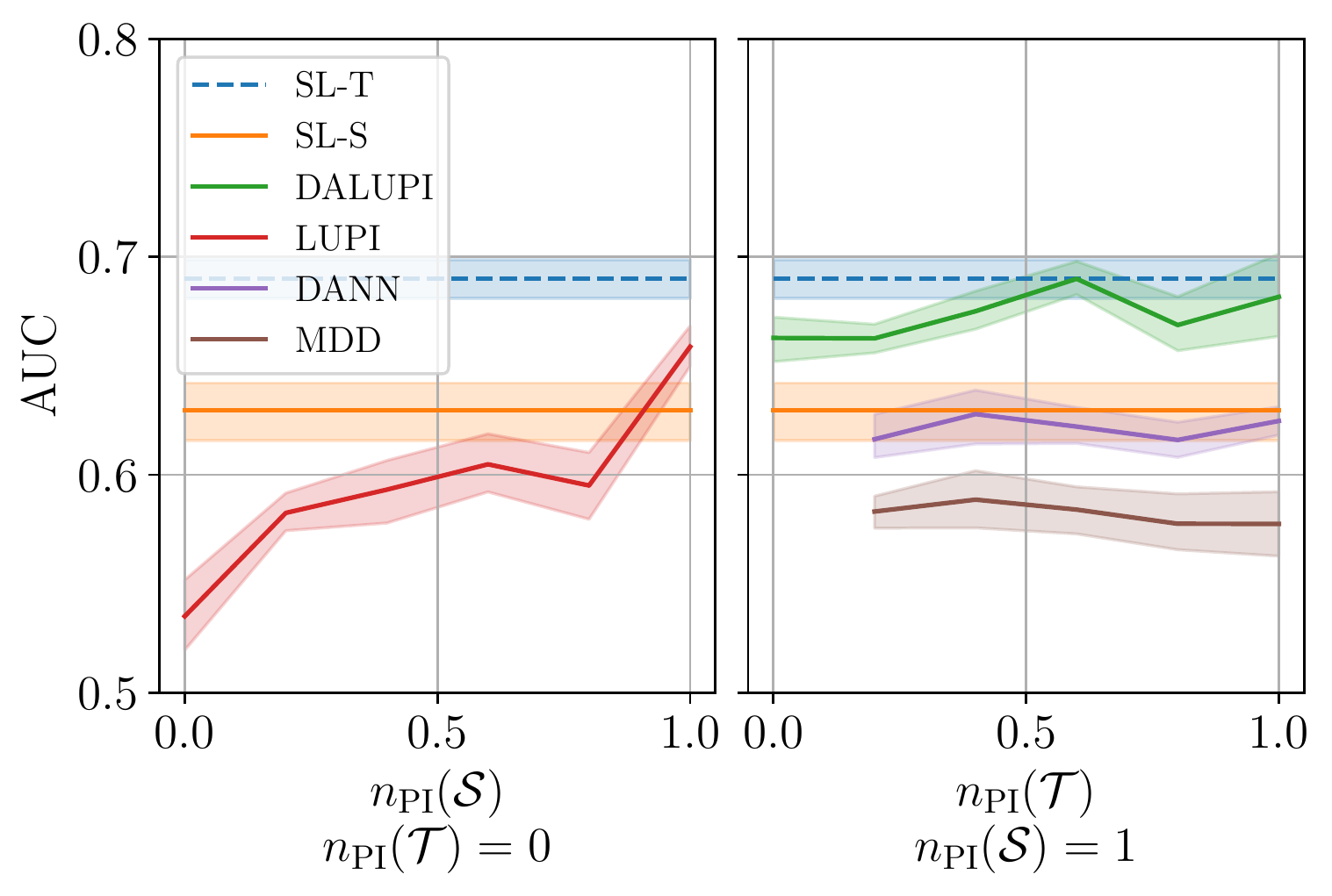}
\captionof{figure}{Entity classification. Target domain AUC. The performance of SL-S and SL-T is extended across the x-axes for visual purposes. DANN and MDD use an increasing fraction of target samples $\tilde{x}$ but no PI.}
\label{fig:cocoexp}
\end{minipage}%
\end{table}

Sufficiency is likely to hold in this task because the pixels contained in a bounding box should be sufficient for an annotator to classify the entity according to the four categories above, irrespective of the pixels outside of the box. Similarly, covariate shift is likely to hold since the label attributed to the pixels in a bounding box should be the same, whether the entity is indoor or outdoor. 

We study the effect of adding privileged information by first training the end-to-end model in a LUPI setting, using all $(x, y)$ samples from the source domain and increasing the fraction of inputs for which PI is available, $n_{\text{PI}}(\mathcal{S})$, from 0 to 1. We then train the model in a DALUPI setting, increasing the fraction of $(\tilde{x}, \tilde{w})$ samples from the target domain, $n_{\text{PI}}(\mathcal{T})$, from 0 to 1, while keeping $n_{\text{PI}}(\mathcal{S})=1$. We train SL-S and SL-T using all available data and increase the fraction of unlabeled target samples used by DANN and MDD from 0.2 to 1 while using all data from the source domain.

Table \ref{tab:coco_results} shows the models' source and target domain AUC, averaged over the four entity classes, when the UDA models have access to all unlabeled target samples, LUPI to all PI from the source domain, and DALUPI to all PI from both domains. Clearly, DALUPI yields a substantial gain in adaptation.
As we see in Figure \ref{fig:cocoexp}, the performance of LUPI increases as $n_{\text{PI}}(\mathcal{S})$ increases. When additional $(\tilde{x}, \tilde{w})$ samples from the target domain are added, DALUPI outperforms SL-S and approaches the performance of SL-T. DANN and MDD do not benefit as much from added unlabeled target samples as DALUPI. Their weak performance could be explained by difficulties in adversarial training. The gap between LUPI and SL-S for $n_{\text{PI}}(\mathcal{S})=0$ is anticipated; we do not expect the detection network to work well without bounding box supervision.

%
%
\subsection{X-ray Classification With Multiple Regions of Interest as PI} \label{sec:chest}
As a real-world application, we study detection of pathologies in chest X-ray images. We use the ChestX-ray8 dataset \citep{nih} as source domain and the CheXpert dataset \citep{irvin2019chexpert} as target domain.\footnote{
This study was granted IRB approval.} As PI, we use the regions of pixels associated with each found pathology, as annotated by domain experts using bounding boxes. For the CheXpert dataset, only pixel-level segmentations are available, and we create bounding boxes that tightly enclose the segmentations. It is not obvious that the pixels within such a bounding box are sufficient for classifying the pathology. For this reason, we suspect that some of the assumptions of Proposition~\ref{prop:identification} may be violated. However, as we find below, DALUPI improves empirical performance compared to baselines for small training sets, thereby demonstrating increased sample efficiency.

\begin{table}[t]
    \begin{minipage}{0.58\textwidth}
        \centering
        \captionof{table}{X-ray task. Test AUC for the three pathologies in the target domain for all considered models. Boldface indicates the best-performing feasible model; SL-T uses target labels.\label{tab:chestxray0}}%
        \begin{tabular}[b]{lccc}
        \toprule
            & ATL                             & CM                              & PE                              \\ \midrule
        SL-T         & \multicolumn{1}{l}{57 (56, 58)} & \multicolumn{1}{l}{59 (55, 63)} & \multicolumn{1}{l}{79 (78, 80)} \\ \midrule
        SL-S         & \textbf{55 (55, 56)}                     & 61 (58, 64)                     & 73 (70, 75)                     \\
        DANN         & 53 (51, 55)            & 55 (53, 58)                     & 55 (51, 61)                     \\
        MDD         & 49 (48, 50)            & 51 (51, 52)                     & 51 (48, 54)                     \\
        DALUPI & \textbf{55 (55, 56)}                     & \textbf{72 (71, 73)}            & \textbf{74 (72, 76)}            \\ \bottomrule
        \end{tabular}
    \end{minipage}
    \hfill
    \begin{minipage}{0.40\textwidth}
        \centering
        \includegraphics[width=\textwidth]{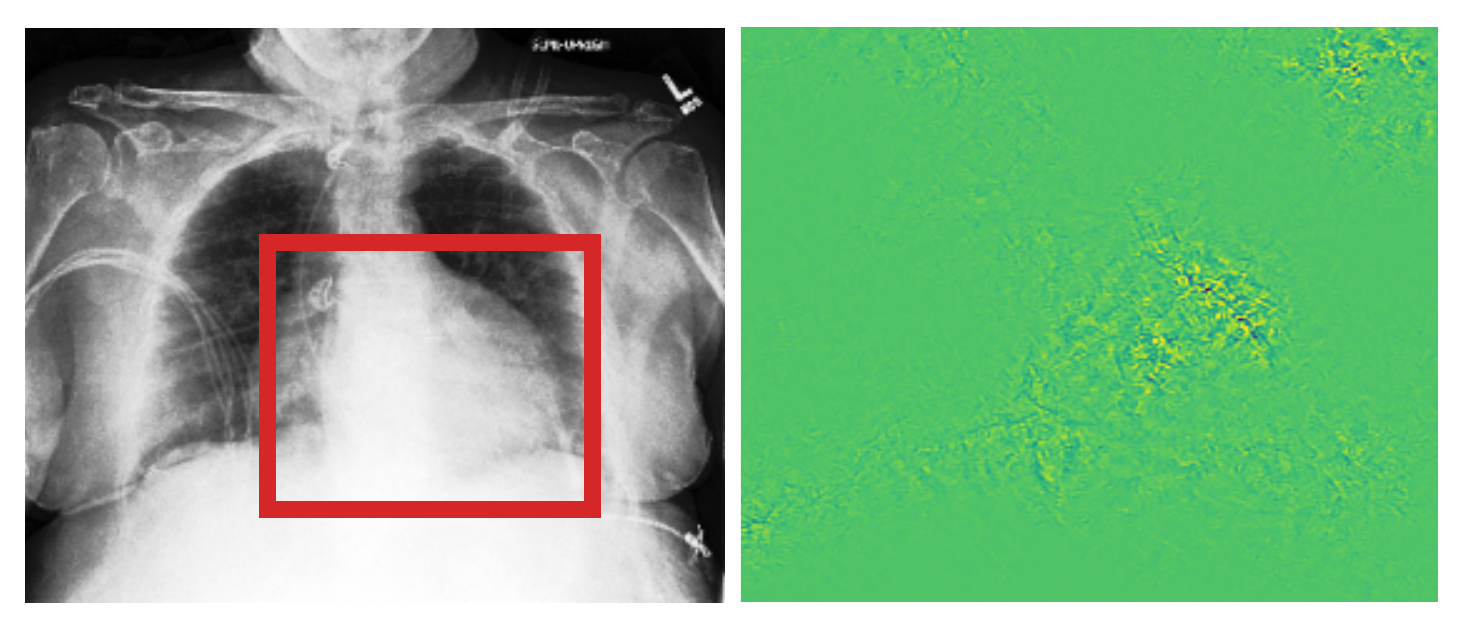}
        \captionof{figure}{Left: Example from the X-ray target test set with label CM. The red rectangle indicates the bounding box 
        predicted by DALUPI. 
        Right: saliency map for CM for SL-S. 
        \label{fig:saliencychest1}}
    \end{minipage}
\end{table}

We consider the three pathologies that exist in both datasets and for which there are annotated findings: atelectasis (ATL: collapsed lung), cardiomegaly (CM: enlarged heart), and pleural effusion (PE: water around the lung). 
There are 457 and 118 annotated images in the source and target domain, respectively. We train DALUPI, DANN and MDD using all these images. SL-S is trained with the 457 source images and SL-T with the 118 target images as well as 339 labeled but non-annotated target images. 
Neither SL-S, SL-T, DANN, nor MDD support using privileged information.
The distributions of labels and bounding box annotations are given in Appendix \ref{app:chestxray}.

In Table \ref{tab:chestxray0}, we present the per-class AUCs in the target domain. DALUPI outperforms all baseline models, including the target oracle, in detecting CM. For ATL and PE, it performs similarly to or better than the other feasible models. That SL-T is better at predicting PE is not surprising because this pathology is most prevalent in the target domain. 
In Figure \ref{fig:saliencychest1}, we show a single-finding image from the target test set with ground-truth label CM. The predicted bounding box of DALUPI with the highest probability is added to the image. DALUPI identifies the region of interest (the heart) and makes a correct classification. The rightmost panel shows the saliency map for the ground truth class for SL-S. We see that the gradients are mostly constant, indicating that the model is uncertain. 
In Appendix \ref{app:additional_x-ray}, we show AUC for CM for the models trained with additional examples \emph{without} bounding box annotations. We find that SL-S reaches the performance of DALUPI when a large amount of labeled examples are provided. This indicates that identifiability is not the main obstacle for adaptation and that PI improves sample efficiency.

%
%
\section{Related Work}
\label{sec:related}
Learning using privileged information was first introduced by \citet{vapnik_new_2009} for support vector machines (SVMs), and was later extended to empirical risk minimization~\citep{pechyony_pi}. Methods using PI, which is sometimes called hidden information or side information, has since been applied in many diverse settings such as healthcare~\citep{shaikh_transfer_2020}, finance~\citep{silva_financial_2010}, clustering~\citep{feyereisl_privileged_2012} and image recognition~\citep{DADA2019,hoffman_learning_2016}. Related concepts include knowledge distillation~\citep{Hinton2015DistillingTK,lopez-paz_unifying_2016}, where a teacher model trained on additional variables adds supervision to a student model, and weak supervision~\citep{robinson_strength_2020} where so-called weak labels are used to learn embeddings, subsequently used for the task of interest. The use of PI for deep image classification has been investigated by \citet{chen2017} and \citet{Han_2023_ICCV} but these works only cover regular supervised learning where source and target domains coincide. Further, \citet{sharmanska_learning_2014} used regions of interest in images as privileged information to improve the accuracy of image classifiers, but did not consider domain shift either. 

Domain adaptation using PI has been considered before with SVMs~\citep{Li2022,Sarafianos_2017_ICCV}, but not with more complex classifiers such as neural networks. \citet{DADA2019} used scene depth as PI in semantic segmentation using deep neural networks. However, they only used PI from the source domain and they did not provide any theoretical analysis. \citet{xie2020n} provide some theoretical results for a similar setup to ours. However, these are specifically for linear classifiers while our approach holds for any type of classifier.
\citet{mootianthesis} investigated PI and domain adaptation using the information bottleneck method for visual recognition. However, their setting differs from ours in that each observation comprises source-domain and target-domain features, a label and PI. Another related approach is that of subsidiary tasks \citep{kunda2022,Ye2022}. However, in these settings the additional tasks performed are used to build a representation that helps with the main task through domain alignment. Our approach instead seeks to use information which directly relates to the main task.

%
%
\section{Discussion}
\label{sec:conclusion}
We have presented DALUPI: unsupervised domain adaptation by learning using privileged information (PI). The framework provides provable guarantees for adaptation under relaxed assumptions on the input features, at the cost of collecting a larger variable set, such as attribute or bounding box annotations, during training. Our analysis inspired practical algorithms for image classification which we evaluated using three kinds of privileged information. In our experiments we demonstrated tasks where our approach is successful while existing adaptation methods fail. We observed empirically also that methods using privileged information are more sample-efficient than comparable non-privileged learners, in line with the literature. In fact, DALUPI models occasionally even outperform oracle models trained using target labels due to their sample efficiency. Thus, we recommend considering these methods in small-sample settings.

The main contribution of the paper is the proposed learning paradigm for domain adaptation with privileged information. Since common benchmark datasets in UDA lack privileged information related to the learning problem, we created three new tasks for evaluating our framework, see Section \ref{sec:exp_digit}--\ref{sec:chest}, which itself is a notable contribution. We hope that this work inspires the community to develop additional datasets for UDA using privileged information.

To avoid assuming that domain overlap is satisfied with respect to input covariates, we require that the label is conditionally independent of the input features given the PI---that the PI is ``sufficient''. This is a limitation whenever sufficiency is difficult to verify. However, in our motivating example of image classification, a domain expert could \emph{choose} PI so that sufficiency is reasonably justified. Moreover, in experiments on CelebA, we see empirical gains from our approach even when sufficiency is known to be violated. Another limitation is that we still rely on overlap in the privileged information, $W$, which may also be violated in some circumstances. It is more likely that overlap holds for $W$ when, for example, it is a subset of $X$, as argued in Figure~\ref{fig:cat_illustration}.

The use of regions of interest as privileged information brings up an interesting point concerning the relationship between the label and the privileged information. In object detection tasks, it is natural to treat the bounding box coordinates as label information. In this work, however, the learning tasks were multi-class and multi-label image classification, not object detection. Producing a perfect box $W$ was not the goal of the learning task, and the bounding boxes were therefore neither critical for the task nor the labels. Instead, the bounding boxes were privileged information and our experiments in Section \ref{sec:exp_digit}--\ref{sec:chest} sought to quantify the value of this added information, compared to not having it. Therefore, we compared our method to image classification baselines. It is not obvious a priori that learning from object locations improves the adaptation of image classifiers.

In future work, our framework could be applied to a more diverse set of tasks, with different modalities of inputs and privileged information to investigate if the findings here can be replicated and extended. More broadly, using PI may be viewed as ``building in'' domain knowledge in the structure of the adaptation problem and we see this as a promising direction for further research.

\subsection*{Acknowledgements} This work was supported in part by the Wallenberg AI, Autonomous Systems and Software Program (WASP), funded by the Knut and Alice Wallenberg Foundation. The computations/data handling were enabled by resources provided by the National Academic Infrastructure for Supercomputing in Sweden (NAISS), partially funded by the Swedish Research Council through grant agreement no. 2022-06725.

\bibliography{references}
\bibliographystyle{tmlr}

\appendix
\section{Appendix}

%
%





%

%



\section{Experimental Details} \label{app:expdetails}
In this section, we give further details of the experiments. All code is written in Python and we mainly use PyTorch in combination with skorch \cite{skorch} for our implementations of the networks. For Faster R-CNN, we adapt the implementation provided by torchvision through the function \verb|fasterrcnn_resnet50_fpn|. For DANN and MDD, we use the ADAPT TensorFlow implementation \citep{de2021adapt} with a ResNet-50-based encoder.
We initially set the trade-off parameter $\lambda$, which controls the amount of domain adaption regularization, to 0 and then increase it to 0.1 in \SI{10000} gradient steps according to the formula $\lambda = \beta(2/(1 + e^{-p}) - 1) / C$, where $p$ increases linearly from 0 to 1, $\beta$ is a parameter specified for each experiment, and $C = 2/(1 + e^{-1}) - 1$. For MDD, we fix the margin parameter $\gamma$ to 3. The source and target baselines are based on the ResNet-50 architecture if nothing else is stated.

We use the Adam optimizer in all experiments. Learning rate decay is treated as a hyperparameter. For ADAPT models (DANN and MDD), the learning rate is either constant or decayed according to $\mu_0 / (1 + \alpha p)^{3/4}$, where $\mu_0$ is the initial learning rate, $p$ increases linearly from 0 to 1, and $\alpha$ is a parameter specified in each experiment (see below). For non-ADAPT models, the learning rate is either constant or decayed by a factor $0.1$ every $n$th epoch, where $n$ is another hyperparameter.

All models were trained using NVIDIA Tesla A40 GPUs and the  development and evaluation of this study required approximately \SI{30000} hours of GPU training. The code will be made available.

For all models except LUPI and DALUPI, the classifier network following the encoder is a simple MLP with two settings: either it is a single linear layer from inputs to outputs or a three-layer network with ReLU activations between the layers. This choice is treated as a hyperparameter in our experiments. The nonlinear case has the following structure where $n$ is the number of input features:
\begin{itemize}
    \item fully connected layer with $n$ neurons
    \item ReLU activation layer
    \item fully connected layer with $n//2$ neurons
     \item ReLU activation layer
    \item fully connected layer with $n//4$ neurons.
\end{itemize}

\subsection{DALUPI With Two-stage Classifier}
\label{app:twostage}
Here, we describe in more detail how we construct our two-stage classifier for the digit classification task. 
Each image $x_i$ has a single label $y_i \in \{0, \ldots, 4\}$ determined by the MNIST digit. Privileged information is given by a single bounding box with coordinates $t_i\in\bbR^{4}$ enclosing a subset of pixels $w_i$ corresponding to the digit.

We first learn $\hat{d}$ which is a function that takes target image data, $\tilde{x}_i$, and bounding box coordinates, $t_i$, and learns to output bounding box coordinates, $\hat{t}_i$, which should contain the privileged information $w_i$. Note that we do not exactly follow the setup in \eqref{eq:twostage_erm} since we do not need to actually predict the pixel values within the bounding box. If we find a good enough estimator of $t_i$ we should minimize the loss of $f$ in \eqref{eq:twostage_erm}. To obtain the privileged information we apply a deterministic function $\phi$ which crops and scales an image using the associated bounding box, $t_i$. We can now write the composition of these two functions as $\hf(x_i)=\phi(x_i,\hat{d}(x_i))$ which outputs the privileged information. The function $\phi$ is hard-coded and therefore not learned.

In the second step, we train $\hg$ by learning to predict the label from the privileged information $w_i$, which is a cropped version of $x_i$ where the cropping is defined by the bounding box $t_i$ around the digit. This cropping and resizing is performed by $\phi$. When we evaluate the performance of this classifier we combine the two models into one, $\hh(x) = \hg(\phi(x,\hat{d}(x)))$. We use the mean squared error loss for learning $\hat{d}$ and categorical cross-entropy (CCE) loss for $\hg$.

The entire training procedure is described in Algorithm \ref{alg:two-stage}. Finally, $\hh$ is evaluated on target domain images where the output of $\phi(x,\hf(x))$ is used for prediction with $\hg$.
\begin{algorithm}
\caption{Training of the two-stage model.}
\label{alg:two-stage}
\begin{algorithmic}[1]

\Procedure{Two\_stage }{$\tilde{x}_i,w_i,t_i,y_i$}       
        \State Empirically minimize $\frac{1}{m}\sum_{i=1}^{m}\|d(\tilde{x}_i)-t_i\|^2$  and obtain $\hat{d}$.
    
        \State Empirically minimize $\frac{1}{n}\sum_{i=1}^{n} CCE(g(w_i),y_i)$ and obtain $\hat{g}$.

        \State Compose $\hat{d}$, $\hat{g}$ and $\phi$ into $\hh(x)=\hg(\phi(x,\hat{d}(x)))$.
\EndProcedure

\end{algorithmic}
\end{algorithm}
%

\subsection{DALUPI With Faster R-CNN}
\label{app:faster-rcnn}
For multi-label classification, we adapt Faster R-CNN \citep{ren2016faster} outlined in Figure \ref{fig:fasterrcnn} and described below. Faster R-CNN uses a region proposal network (RPN) to generate region proposals which are fed to a detection network for classification and bounding box regression. This way of solving the task in subsequent steps has similarities with our two-stage algorithm although Faster R-CNN can be trained end-to-end. We make small modifications to the training procedure of the original model in the end of this section.
\begin{figure}[t]
\centering
\includegraphics[width=0.5\textwidth]{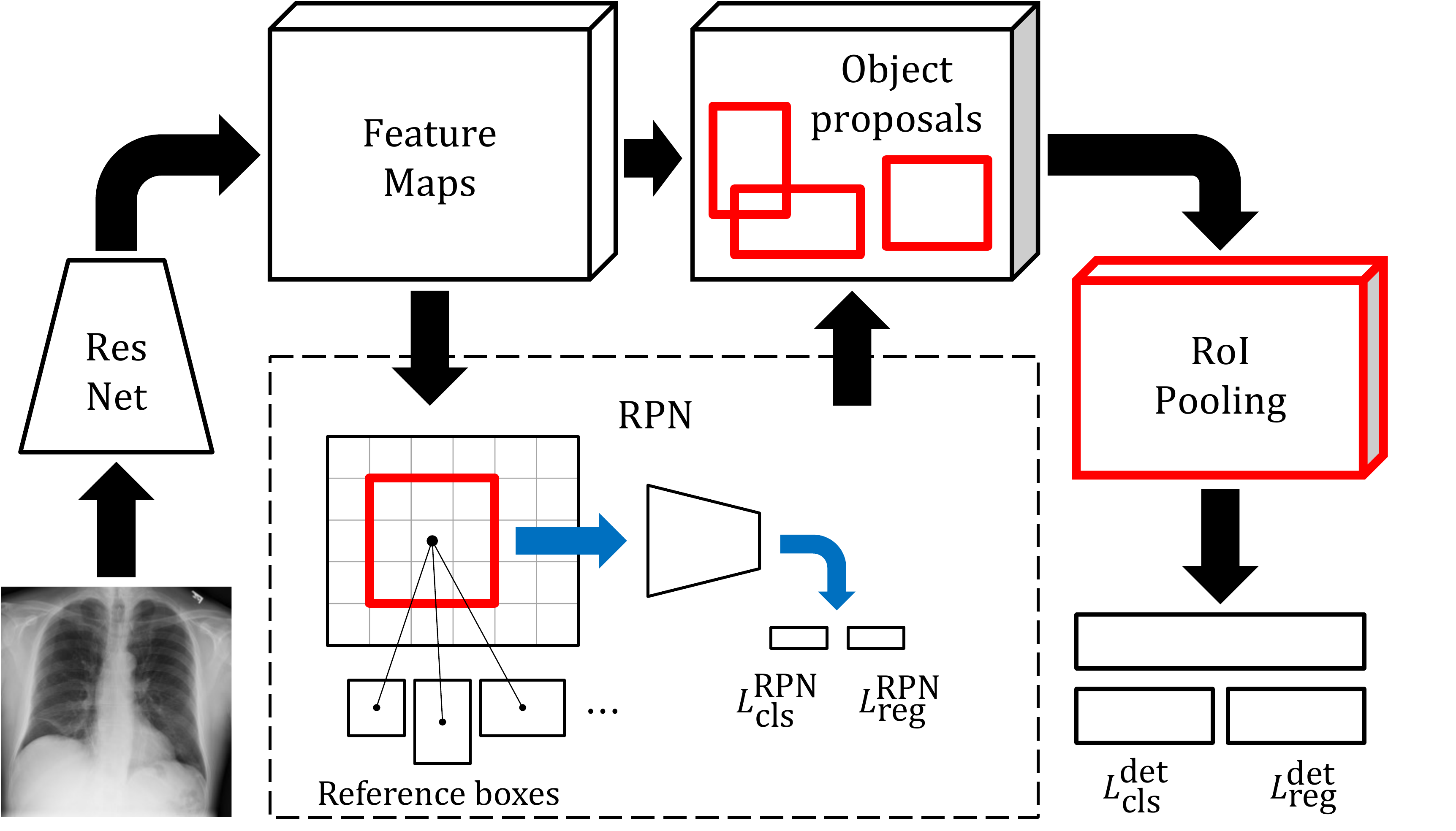}
\caption{Faster R-CNN~\citep{ren2016faster} architecture. The RoI pooling layer and the classification and regression layers are part of the Fast R-CNN detection network \citep{girshick2015fast}.}
\label{fig:fasterrcnn}
\end{figure}

The RPN generates region proposals relative to a fixed number of reference boxes---anchors---centered at the locations of a sliding window moving over convolutional feature maps. Each anchor is assigned a binary label $p\in\{0, 1\}$ based on its overlap with ground-truth bounding boxes; positive anchors are also associated with a ground-truth box with location $t$. The RPN loss for a single anchor is
\begin{equation}
    L^{\text{RPN}}(\hat{p}, p, \hat{t}, t) := 
    L_{\text{cls}}^{\text{RPN}}(\hat{p}, p) +
    pL_{\text{reg}}^{\text{RPN}}(\hat{t}, t),
    \label{eq:rpn}
\end{equation}
where $\hat{t}$ represents the refined location of the anchor and $\hat{p}$ is the estimated probability that the anchor contains an object. The binary cross-entropy loss and a smooth $L_{1}$ loss are used for the classification loss $L^{\text{RPN}}_{\text{cls}}$ and the regression loss $L^{\text{RPN}}_{\text{reg}}$, respectively. For a mini-batch of images, the total RPN loss is computed based on a subset of all anchors, sampled to have a ratio of up to 1:1 between positive and negative ditto.

A filtered set of region proposals are projected onto the convolutional feature maps. For each proposal, the detection network---Fast R-CNN \cite{girshick2015fast}---outputs a probability $\hat{p}(k)$ and a predicted bounding box location $\hat{t}(k)$ for each class $k$. Let $\hat{p}=(\hat{p}(0), \ldots, \hat{p}(K))$, where $\sum_{k}\hat{p}(k) =1$, $K$ is the number of classes and $0$ represents a catch-all background class. For a single proposal with ground-truth coordinates $t$ and multi-class label $u\in\{0, \ldots, K\}$, the detection loss is
\begin{equation}
    L^{\text{det}}(\hat{p}, u, \hat{t}_{u}, t) = 
    L_{\text{cls}}^{\text{det}}(\hat{p}, u) +
    \mathbf{I}_{u \geq 1}L_{\text{reg}}^{\text{det}}(\hat{t}_u, t),
    \label{eq:roi}
\end{equation}
where $L_{\text{cls}}^{\text{det}}(\hat{p}, u)=-\log{\hat{p}(u)}$ and $L_{\text{reg}}^{\text{det}}$ is a smooth $L_{1}$ loss. To obtain a probability vector for the entire image, we maximize, for each class $k$, over the probabilities of all proposals.

During training, Faster R-CNN requires that all input images $x$ come with at least one ground-truth annotation (bounding box) $w$ and its corresponding label $u$. To increase sample-efficiency, we enable training the model using non-annotated but labeled samples $(x, y)$ from the source domain and annotated but unlabeled samples $(\tilde{x}, \tilde{w})$ from the target domain. In the RPN, no labels are needed, and we simply ignore anchors from non-annotated images when sampling anchors for the loss computation. For the computation of \eqref{eq:roi}, we handle the two cases separately. We assign the label $u=-1$ to all ground-truth annotations from the target domain and multiply $L_{\text{cls}}^{\text{det}}$ by the indicator $\mathbf{I}_{u \geq 0}$. For non-annotated samples $(x, y)$ from the source domain, there are no box-specific coordinates $t$ or labels $u$ but only the labels $y$ for the entire image. In this case, \eqref{eq:roi} is undefined and we instead compute the binary cross-entropy loss between the per-image label and the probability vector for the entire image.

We train the RPN and the detection network jointly as described in \cite{ren2016faster}. To extract feature maps, we use a Feature Pyramid Network \cite{lin2017fpn} on top of a ResNet-50 architecture \cite{resnet}. We use the modfied model---DALUPI---in the experiments in Section \ref{sec:coco} and \ref{sec:chest}. In Section \ref{sec:coco}, we also train this model in a LUPI setting, where no information from the target domain is used.

\subsection{Digit Classification With Single Bounding Box as PI} \label{app:digit}
In this experiment, we separate \SI{20}{\percent} of the available source and target data into a test set. We likewise use \SI{20}{\percent} of the training data for validation purposes.
For the baselines SL-S and SL-T we use a ResNet-18 network without pretrained weights.
For DALUPI we use a non-pretrained ResNet-18 for the function $\hat{f}$ where we replace the default fully connected layer with a fully connected layer with 4 neurons to predict the coordinates of the bounding box. The predicted bounding box is resized to a $28\times28$ square no matter the initial size. We use a simpler convolutional neural network for the function $\hat{g}$ with the following structure:
\begin{itemize}
    \item convolutional layer with 16 output channels, kernel size of 5, stride of 1, and padding of 2
    \item max pooling layer with kernel size 2, followed by a ReLU activation
    \item convolutional layer with 32 output channels, kernel size of 5, stride of 1, and padding of 2
    \item max pooling layer with kernel size 2, followed by a ReLU activation
    \item dropout layer with  $p=0.4$
    \item fully connected layer with 50 out features, followed by ReLU activation
    \item dropout layer with  $p=0.2$
    \item fully connected layer with 5 out features.
\end{itemize}

The model training is stopped when the best validation accuracy (or validation loss for $\hat{f}$) does not improve over 10 epochs or when the model has been trained for 100 epochs, whichever occurs first.

\subsubsection{Hyperparameters}
We randomly choose hyperparameters from the following predefined sets of values:
\begin{itemize}
    \item SL-S and SL-T:
    \begin{itemize}
        \item batch size: $(16, 32, 64)$
        \item initial learning rate: (\SI{1.0e-04}, \SI{1.0e-03})
        \item step size $n$ for learning rate decay: (15, 30, 100)
        \item weight decay: (\SI{1.0e-04}, \SI{1.0e-03})
        \item dropout (encoder): (0, 0.1, 0.2, 0.5)
        \item nonlinear classifier: (\verb|True|, \verb|False|).
    \end{itemize}
    \item DALUPI:
    \begin{itemize}
        \item batch size: $(16, 32, 64)$
        \item initial learning rate: (\SI{1.0e-04}, \SI{1.0e-03})
        \item step size $n$ for learning rate decay: (15, 30, 100)
        \item weight decay: (\SI{1.0e-04}, \SI{1.0e-03}).
    \end{itemize}
    \item DANN:
    \begin{itemize}
        \item batch size: $(16, 32, 64)$
        \item initial learning rate: (\SI{1.0e-04}, \SI{1.0e-03})
        \item parameter $\alpha$ for learning rate decay: (0, 1.0)
        \item weight decay: (\SI{1.0e-04}, \SI{1.0e-03})
        \item dropout (encoder): $(0, 0.1, 0.2, 0.5)$
        \item width of discriminator network: $(64, 128, 256)$
        \item depth of discriminator network: $(2, 3)$
        \item nonlinear classifier: (\verb|True|, \verb|False|)
        \item parameter $\beta$ for adaption regularization decay: (0.1, 1.0, 10.0).
    \end{itemize}
    \item MDD:
    \begin{itemize}
        \item batch size: $(16, 32, 64)$
        \item initial learning rate: (\SI{1.0e-04}, \SI{1.0e-03})
        \item parameter $\alpha$ for learning rate decay: (0, 1.0)
        \item weight decay: (\SI{1.0e-04}, \SI{1.0e-03})
        \item dropout (encoder): $(0, 0.1, 0.2, 0.5)$
        \item nonlinear classifier: (\verb|True|, \verb|False|)
        \item maximum norm value for classifier weights: (0.5, 1.0, 2.0)
        \item parameter $\beta$ for adaption regularization decay: (0.1, 1.0, 10.0).
    \end{itemize}
\end{itemize}

\subsection{Binary Classification With Binary Attribute Vector}
In our experiment based on CelebA~\citep{Liu2015}, the input $x$ is an RGB image which has been resized to 64×64 pixels, the target $y$ is a binary label for gender of the subject of the image, and the privileged information $w$ are 7 binary-valued attributes. The attributes used in this experiment are: \verb|Bald|, \verb|Bangs|, \verb|Mustache|, \verb|Smiling|, \verb|5_o_Clock_Shadow|,
\verb|Oval_Face| and \verb|Heavy_Makeup|. We use a
subset of the CelebA dataset with 2000 labeled source examples and 3000 unlabeled target examples. We use 1000 samples each for the validation set, source test set and target test set respectively. When we use privileged information from the source domain, in addition to the target, we use an additional 30000 $(x,w)$ samples with PI. To conform with the experiment done in \citet{xie2020n} we only train for 25 epochs. The early stopping patience for the DALUPI method is 15 for each network.  
\label{app:celeb}

\subsubsection{Hyperparameters}
We randomly choose hyperparameters from the following predefined sets of values:
\begin{itemize}
    \item SL-S and SL-T:
    \begin{itemize}
        \item batch size: $(16, 32, 64)$
        \item initial learning rate: (\SI{1.0e-04}, \SI{1.0e-03})
        \item step size $n$ for learning rate decay: (15, 30, 100)
        \item weight decay: (\SI{1.0e-04}, \SI{1.0e-03})
        \item dropout (encoder): (0, 0.1, 0.2, 0.5)
        \item nonlinear classifier: (\verb|True|, \verb|False|).
    \end{itemize}
    \item DALUPI:
    \begin{itemize}
        \item batch size: $(16, 32, 64)$
        \item initial learning rate: (\SI{1.0e-05}, \SI{1.0e-04}, \SI{1.0e-03})
        \item step size $n$ for learning rate decay: (15, 30, 100)
        \item weight decay: (\SI{1.0e-04}, \SI{1.0e-03}).
    \end{itemize}
    \item DANN:
    \begin{itemize}
        \item batch size: $(16, 32, 64)$
        \item initial learning rate: (\SI{1.0e-04}, \SI{1.0e-03})
        \item parameter $\alpha$ for learning rate decay: (0, 1.0)
        \item weight decay: (\SI{1.0e-04}, \SI{1.0e-03})
        \item dropout (encoder): $(0, 0.1, 0.2, 0.5)$
        \item width of discriminator network: $(64, 128, 256)$
        \item depth of discriminator network: $(2, 3)$
        \item nonlinear classifier: (\verb|True|, \verb|False|)
        \item parameter $\beta$ for adaption regularization decay: (0.1, 1.0, 10.0).
    \end{itemize}
    \item MDD:
    \begin{itemize}
        \item batch size: $(16, 32, 64)$
        \item initial learning rate: (\SI{1.0e-04}, \SI{1.0e-03})
        \item parameter $\alpha$ for learning rate decay: (0, 1.0)
        \item weight decay: (\SI{1.0e-04}, \SI{1.0e-03})
        \item dropout (encoder): $(0, 0.1, 0.2, 0.5)$
        \item nonlinear classifier: (\verb|True|, \verb|False|)
        \item maximum norm value for classifier weights: (0.5, 1.0, 2.0)
        \item parameter $\beta$ for adaption regularization decay: (0.1, 1.0, 10.0).
    \end{itemize}
\end{itemize}
\subsection{Entity Classification} \label{app:coco}

In the entity classification experiment, we train all models for at most 50 epochs. If the validation AUC does not improve for 10 subsequent epochs, we stop the training earlier. No pretrained weights are used in this experiment since we find that the task is too easy to solve with pretrained weights. For DALUPI and LUPI we use the default anchor sizes for each of the feature maps (32, 64, 128, 256, 512), and for each anchor size we use the default aspect ratios (0.5, 1.0, 2.0). We use the binary cross entropy loss for SL-S, SL-T, DANN, and MDD.

We use the 2017 version of the MS-COCO dataset \cite{lin2014microsoft}. As decribed in Section \ref{sec:coco}, we extract indoor images by sorting out images from the super categories ``indoor'' and ``appliance'' that also contain at least one of the entity classes. Outdoor images are extracted in the same way using the super categories ``vehicle'' and ``outdoor''. Images that match both domains (for example an indoor image with a toy car) are removed, as are any gray-scale images. We also include 1000 negative examples, i.e., images with none of the entities present, in both domains. In total, there are \SI{5231} images in the source domain and \SI{5719} images in the target domain. From these pools, we randomly sample \SI{3000}, \SI{1000}, and \SI{1000} images for training, validation, and testing, respectively. In Table \ref{tab:cocolabels} we describe the label distribution in both domains. All images are resized to $320\times 320$.
\begin{table}[t]
\centering
\caption{Marginal label distribution in source and target domains for the entity classification task based on the MS-COCO dataset. The background class contains images where none of the four entities are present.}
\label{tab:cocolabels}
\begin{tabular}{lllllll}
\toprule
Domain & Person & Dog  & Cat  & Bird & Background \\ \midrule
Source & 2963   & 569  & 1008 & 213  & 1000       \\
Target & 3631   & 1121 & 423  & 712  & 1000       \\ \bottomrule
\end{tabular}
\end{table}

\subsubsection{Hyperparameters}
We randomly choose hyperparameters from the following predefined sets of values. For information about the specific parameters in LUPI and DALUPI, we refer to the paper by \citet{ren2016faster}. Here, RoI and NMS refer to region of interest and non-maximum suppression, respectively.
\begin{itemize}
    \item SL-S and SL-T:
    \begin{itemize}
        \item batch size: $(16, 32, 64)$
        \item initial learning rate: (\SI{1.0e-04}, \SI{1.0e-03})
        \item step size $n$ for learning rate decay: (15, 30, 100)
        \item weight decay: (\SI{1.0e-4}, \SI{1.0e-3})
        \item dropout (encoder): (0, 0.1, 0.2, 0.5)
        \item nonlinear classifier: (\verb|True|, \verb|False|).
    \end{itemize}

    \item DANN:
    \begin{itemize}
        \item batch size: $(16, 32, 64)$
        \item initial learning rate: (\SI{1.0e-04}, \SI{1.0e-03})
        \item parameter $\alpha$ for learning rate decay: (0, 1.0)
        \item weight decay: (\SI{1.0e-04}, \SI{1.0e-03})
        \item dropout (encoder): $(0, 0.1, 0.2, 0.5)$
        \item width of discriminator network: $(64, 128, 256)$
        \item depth of discriminator network: $(2, 3)$
        \item nonlinear classifier: (\verb|True|, \verb|False|)
        \item parameter $\beta$ for adaption regularization decay: (0.1, 1.0, 10.0).
    \end{itemize}
    \item MDD:
    \begin{itemize}
        \item batch size: $(16, 32, 64)$
        \item initial learning rate: (\SI{1.0e-04}, \SI{1.0e-03})
        \item parameter $\alpha$ for learning rate decay: (0, 1.0)
        \item weight decay: (\SI{1.0e-04}, \SI{1.0e-03})
        \item dropout (encoder): $(0, 0.1, 0.2, 0.5)$
        \item nonlinear classifier: (\verb|True|, \verb|False|)
        \item maximum norm value for classifier weights: (0.5, 1.0, 2.0)
        \item parameter $\beta$ for adaption regularization decay: (0.1, 1.0, 10.0).
    \end{itemize}
    
    \item LUPI and DALUPI:
    \begin{itemize}
        \item batch size: $(16, 32, 64)$
        \item learning rate: (\SI{1.0e-04}, \SI{1.0e-03})
        \item step size $n$ for learning rate decay: (15, 30, 100)
        \item weight decay: (\SI{1.0e-4}, \SI{1.0e-3})
        \item IoU foreground threshold (RPN): (0.6, 0.7, 0.8, 0.9)
        \item IoU background threshold (RPN): (0.2, 0.3, 0.4)
        \item batchsize per image (RPN): (32, 64, 128, 256)
        \item fraction of positive samples (RPN): (0.4, 0.5, 0.6, 0.7)
        \item NMS threshold (RPN): (0.6, 0.7, 0.8)
        \item RoI pooling output size (Fast R-CNN): (5, 7, 9)
        \item IoU foreground threshold (Fast R-CNN): (0.5, 0.6)
        \item IoU background threshold (Fast R-CNN): (0.4, 0.5)
        \item batchsize per image (Fast R-CNN): (16, 32, 64, 128)
        \item fraction of positive samples (Fast R-CNN): (0.2, 0.25, 0.3)
        \item NMS threshold (Fast R-CNN): (0.4, 0.5, 0.6)
        \item detections per image (Fast R-CNN): (25, 50, 75, 100).
    \end{itemize}

\end{itemize}

\subsection{X-ray Classification} \label{app:chestxray}

In the X-ray classification experiment, we train all models for at most 50 epochs. If the validation AUC does not improve for 10 subsequent epochs, we stop the training earlier. We then fine-tune all models, except DANN and MDD, for up to 20 additional epochs. The number of encoder layers that are fine-tuned is a hyperparameter for which we consider different values. We start the training with weights pretrained on ImageNet. For DALUPI and LUPI we use the default anchor sizes for each of the feature maps (32, 64, 128, 256, 512), and for each anchor size we use the default aspect ratios (0.5, 1.0, 2.0). We use the binary cross entropy loss for SL-S, SL-T, DANN, and MDD.

\begin{table}[t]
\caption{Marginal distribution of labels of images and bounding boxes in the source and target domain, respectively, for the chest X-ray classification experiment. ATL=Atelectasis; CM=Cardiomegaly; PE=Effusion; NF=No Finding. \label{tab:chestxraylabels}}
\centering
\begin{tabular}{lllll}
\toprule
Data                             & ATL   & CM    & PE    & NF \\ \midrule
$x\sim\mathcal{S}$                  & 11559 & 2776  & 13317 & 60361 \\
$w\sim\mathcal{S}$ & 180   & 146   & 153   & -     \\ \midrule
$\tilde{x}\sim\mathcal{T}$                  & 14278 & 20466 & 74195 & 16996 \\
$\tilde{w}\sim\mathcal{T}$ & 75    & 66    & 64    & -     \\ \bottomrule
\end{tabular}
\end{table}

In total, there are \SI{83 519} (457) and \SI{120435} (118) images (annotated images) in the source and target domain, respectively. The distributions of labels and bounding box annotations are provided in Table \ref{tab:chestxraylabels}. Here, ``NF'' refers to images with no confirmed findings. In the annotated images, there are 180/146/153 and 75/66/64 examples of ATL/CM/PE in each domain respectively. Validation and test sets are sampled from non-annotated images and contain \SI{10000} samples each.  All annotated images are reserved for training. We merge the default training and validation datasets before splitting the data and resize all images to $320\times 320$. For the source dataset (ChestX-ray8), the bounding boxes can be found together with the dataset. The target segmentations can be found here: \url{https://stanfordaimi.azurewebsites.net/datasets/23c56a0d-15de-405b-87c8-99c30138950c}. 

\subsubsection{Hyperparameters}
We choose hyperparameters randomly from the following predefined sets of values. For information about the specific parameters in DALUPI, we refer to the paper by \citet{ren2016faster}. RoI and NMS refer to region of interest and non-maximum suppression, respectively. 
\begin{itemize}
    \item SL-S and SL-T:
    \begin{itemize}
        \item batch size: (16, 32, 64)
        \item learning rate: (\SI{1.0e-4}, \SI{1.0e-3})
        \item weight decay: (\SI{1.0e-4}, \SI{1.0e-3})
        \item dropout (encoder): (0, 0.1, 0.2, 0.5)
        \item nonlinear classifier: (\verb|True|, \verb|False|)
        \item number of layers to fine-tune: (3, 4, 5)
        \item learning rate (fine-tuning): (\SI{1.0e-5}, \SI{1.0e-4}).
    \end{itemize}

        \item DANN:
    \begin{itemize}
        \item batch size: $(16, 32, 64)$
        \item initial learning rate: (\SI{1.0e-04}, \SI{1.0e-03})
        \item parameter $\alpha$ for learning rate decay: (0, 1.0)
        \item weight decay: (\SI{1.0e-04}, \SI{1.0e-03})
        \item number of trainable layers (encoder): (1, 2, 3, 4, 5)
        \item dropout (encoder): $(0, 0.1, 0.2, 0.5)$
        \item width of discriminator network: $(64, 128, 256)$
        \item depth of discriminator network: $(2, 3)$
        \item nonlinear classifier: (\verb|True|, \verb|False|)
        \item parameter $\beta$ for adaption regularization decay: (0.1, 1.0, 10.0).
    \end{itemize}
    \item MDD:
    \begin{itemize}
        \item batch size: $(16, 32, 64)$
        \item initial learning rate: (\SI{1.0e-04}, \SI{1.0e-03})
        \item parameter $\alpha$ for learning rate decay: (0, 1.0)
        \item weight decay: (\SI{1.0e-04}, \SI{1.0e-03})
        \item number of trainable layers (encoder): (1, 2, 3, 4, 5)
        \item dropout (encoder): $(0, 0.1, 0.2, 0.5)$
        \item nonlinear classifier: (\verb|True|, \verb|False|)
        \item maximum norm value for classifier weights: (0.5, 1.0, 2.0)
        \item parameter $\beta$ for adaption regularization decay: (0.1, 1.0, 10.0).
    \end{itemize}
    
    \item DALUPI:
    \begin{itemize}
        \item batch size: $(16, 32, 64)$
        \item learning rate: (\SI{1.0e-04})
        \item weight decay: (\SI{1.0e-4}, \SI{1.0e-3})
        \item IoU foreground threshold (RPN): (0.6, 0.7, 0.8, 0.9)
        \item IoU background threshold (RPN): (0.2, 0.3, 0.4)
        \item batchsize per image (RPN): (32, 64, 128, 256)
        \item fraction of positive samples (RPN): (0.4, 0.5, 0.6, 0.7)
        \item NMS threshold (RPN): (0.6, 0.7, 0.8)
        \item RoI pooling output size (Fast R-CNN): (5, 7, 9)
        \item IoU foreground threshold (Fast R-CNN): (0.5, 0.6)
        \item IoU background threshold (Fast R-CNN): (0.4, 0.5)
        \item batchsize per image (Fast R-CNN): (16, 32, 64, 128)
        \item fraction of positive samples (Fast R-CNN): (0.2, 0.25, 0.3)
        \item NMS threshold (Fast R-CNN): (0.4, 0.5, 0.6)
        \item detections per image (Fast R-CNN): (25, 50, 75, 100)
        \item learning rate (fine-tuning): (\SI{1.0e-5}, \SI{1.0e-4})
        \item number of layers to fine-tune: (3, 4, 5).
    \end{itemize}
\end{itemize}

\section{Additional Results for MNIST Experiment}
In Figure \ref{fig:saliencymniste02} and \ref{fig:saliencymniste10}, we show some example images from the MNIST task with associated saliency maps from the source-only model for different values of the skew parameter $\epsilon$. We can see that for a lower value of epsilon the SL-S model activations seem concentrated on the area with the digit, while when the correlation with the background is large the model activations are more spread out.
\begin{figure}[t]
\centering
\begin{subfigure}[b]{0.4\columnwidth}
\centering
\lineskip=0pt
\includegraphics[width=0.5\linewidth]{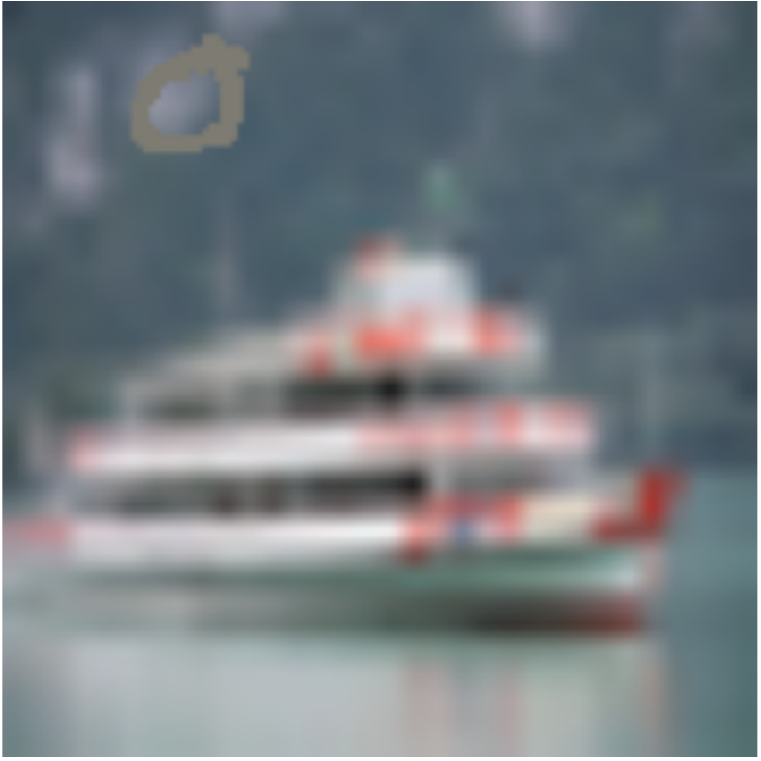}%
\includegraphics[width=0.5\linewidth]{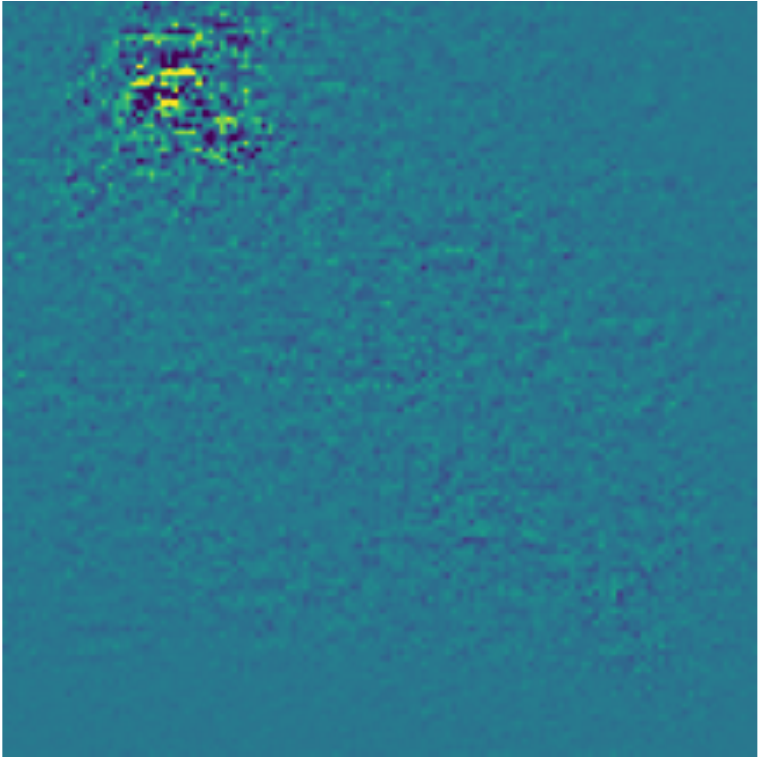}%
\caption{SL-S, $\epsilon = 0.2$}\label{fig:saliencymniste02}
\end{subfigure}%
\hspace{3cm}
\begin{subfigure}[b]{0.4\columnwidth}
\centering
\lineskip=0pt
\includegraphics[width=0.5\linewidth]{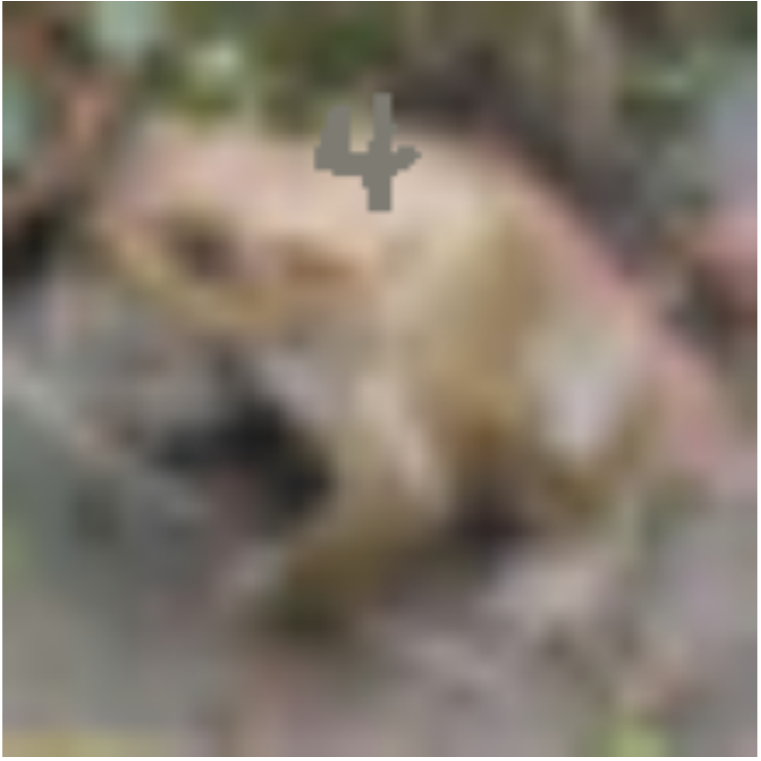}%
\includegraphics[width=0.5\linewidth]{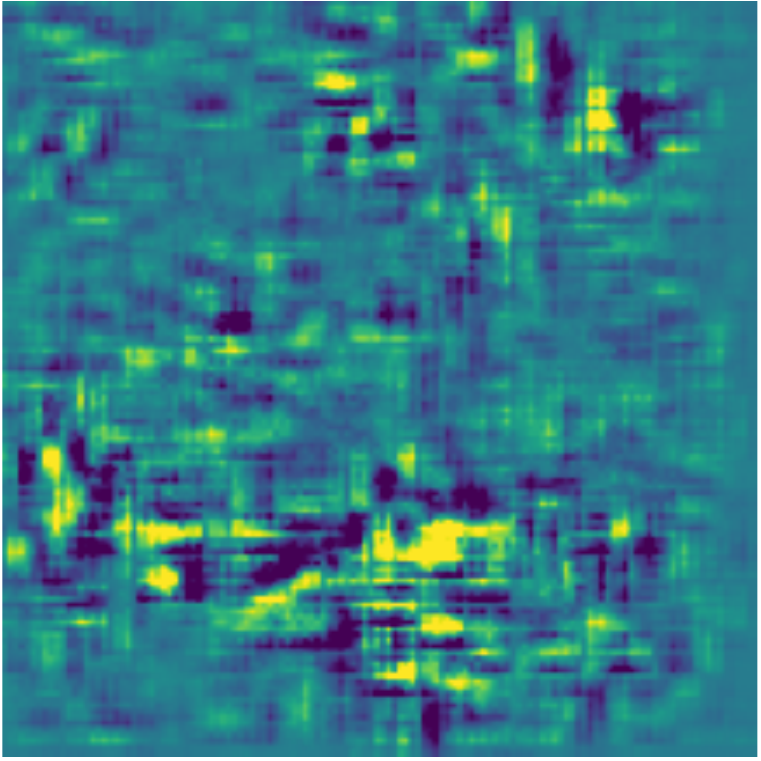}
\caption{SL-S, $\epsilon = 1.0$}\label{fig:saliencymniste10}
\end{subfigure}%
\caption{Example images (top) and saliency maps (bottom) from SL-S when trained with source skew $\epsilon=0.2$ (a) and $\epsilon=1$ (b).}
\end{figure}%

\section{Additional Results for Chest X-ray Experiment}
\label{app:additional_x-ray}


\begin{figure*}[t!]
\centering
\begin{subfigure}[b]{0.45\textwidth}
    \centering
    \includegraphics[width=.95\textwidth]{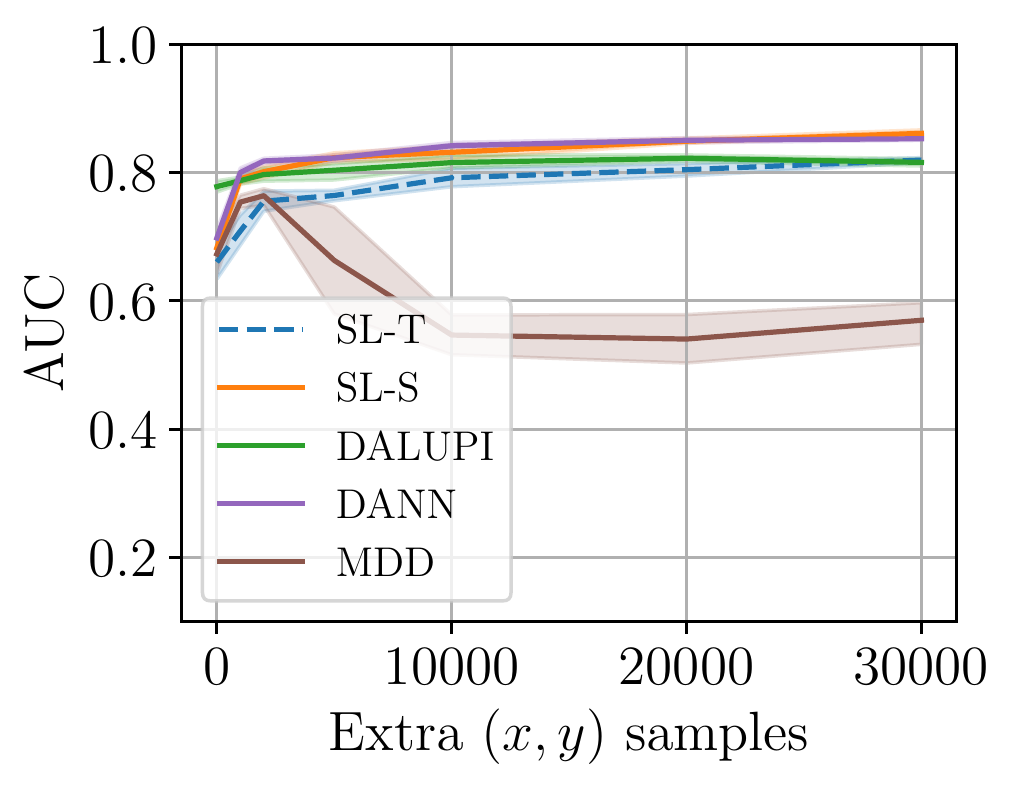}
    \caption{Source test AUC.}
    \label{fig:chestsource_app}
\end{subfigure}%
\begin{subfigure}[b]{0.45\textwidth}
    \centering
    \includegraphics[width=.95\textwidth]{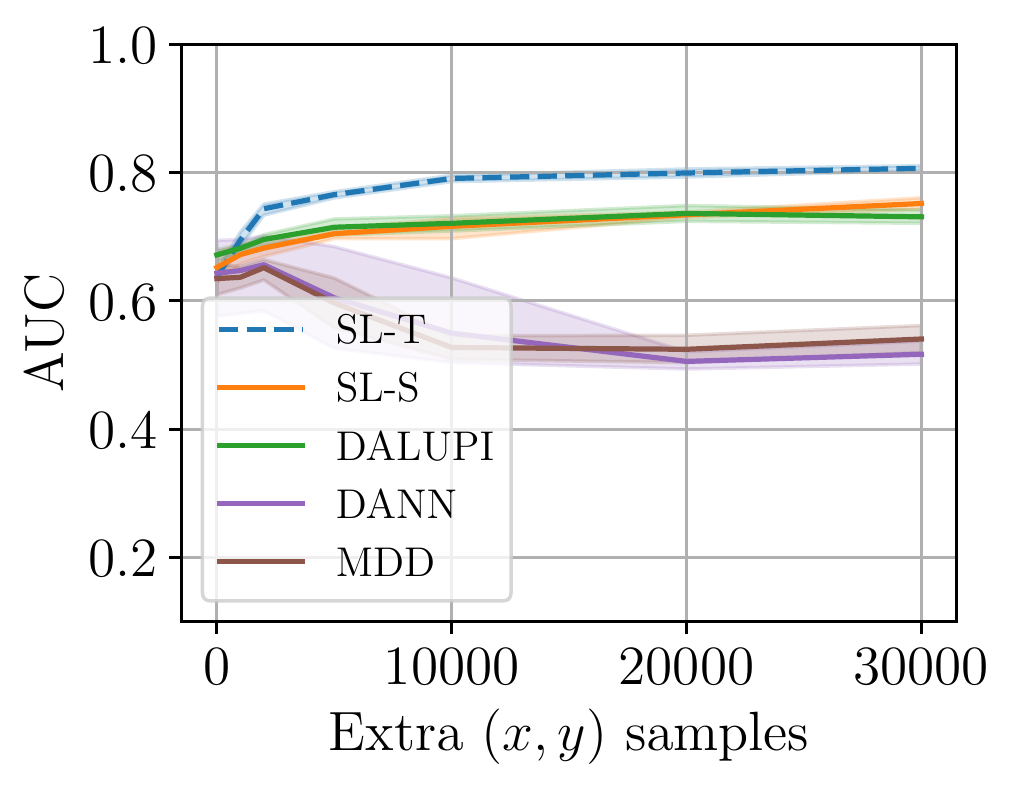}
    \caption{Target test AUC.}
    \label{fig:chesttarget_app}
\end{subfigure}%
\caption{Classification of chest X-ray images. Model performance on source (a) and target (b) domains. The AUC is averaged over the three pathologies: ATL, CM and PE.
The \SI{95}{\percent} confidence intervals are computed using bootstrapping the results over five seeds. \label{fig:chestexp_app}}
\end{figure*}
\begin{figure}
   \centering
         \includegraphics[width=0.45\textwidth]{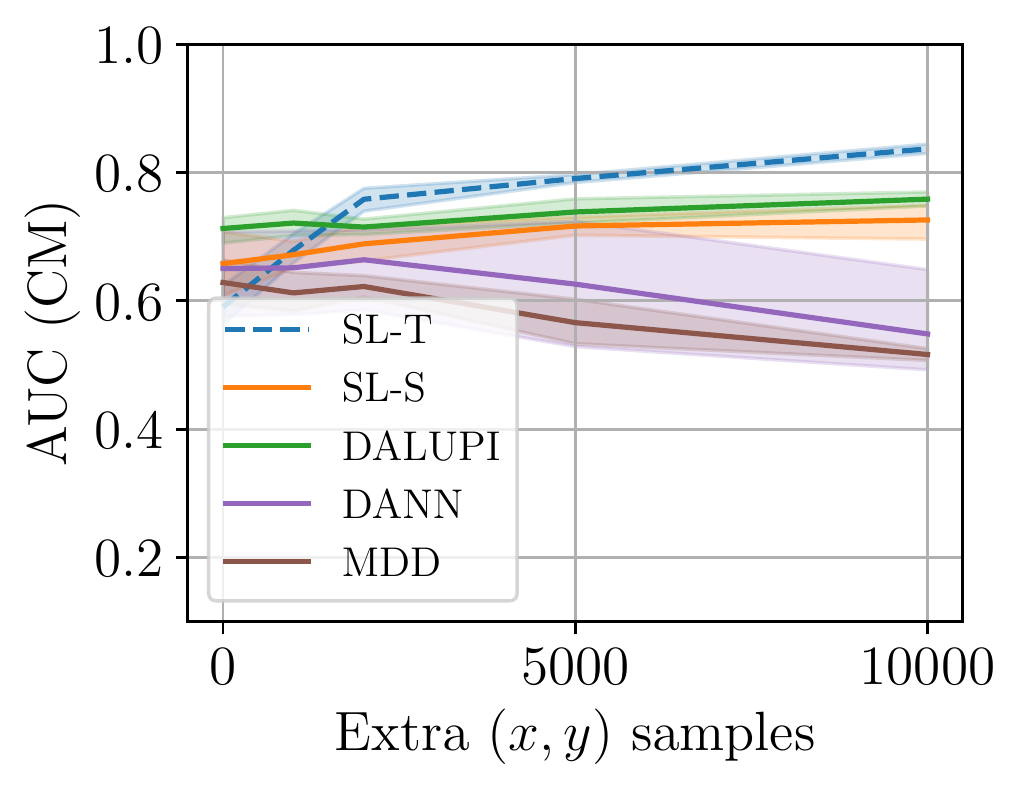}
         \caption{Test AUC for CM in $\mathcal{T}$. DALUPI outperforms the other models when no extra $(x, y)$ samples are provided. Adding examples without bounding box annotations improves the performance of SL-S and SL-T, eventually causing the latter to surpass DALUPI. \label{fig:saliencychest2}}
\end{figure}
In Figure \ref{fig:chestexp_app}, we show the \emph{average} AUC when additional training data of up to \SI{30000} samples are added in the chest X-ray experiment. We see that, once given access to a much larger amount of labeled samples, SL-S and DALUPI perform comparably in the target domain.

In Figure \ref{fig:saliencychest2}, we show AUC for the pathology CM when additional training data \emph{without} bounding box annotations are added. We see that SL-S catches up to the performance of DALUPI when a large amount of labeled examples are provided. These results indicate that identifiability is not the primary obstacle for adaptation, and that PI improves sample efficiency.


\section{Proof of Proposition~\ref{prop:identification}}
\label{app:identifiability}
\begin{thmprop*}
Let Assumptions~\ref{asmp:covshift} and \ref{asmp:overlap} be satisfied w.r.t. $W$ (not necessarily w.r.t. $X$) and let Assumption~\ref{asmp:sufficiency} hold as stated. Then, the target risk $R_\cT$ is identified for hypotheses $h : \cX \rightarrow \cY$, 
\begin{align*}
R_\cT(h) & = \sum_{x} \cT(x) \sum_w \cT(w\mid x) \sum_{y} \cS(y\mid w) L(h(x), y)~.
\end{align*}%
and, for $L$ the squared loss, a minimizer of $R_\cT$ is 
$
h_\cT^*(x)  = \sum_{w}\cT(w \mid x) \E_\cS[Y\mid w]~.
$
\end{thmprop*}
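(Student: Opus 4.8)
The plan is to identify the target risk by expressing the joint target density $\cT(x,y)$ in terms of the three observable quantities $\cT(x)$, $\cT(w\mid x)$ and $\cS(y\mid w)$, and then to read off the minimizer from the pointwise first-order condition. First I would write the target risk in its primitive form,
\begin{equation*}
R_\cT(h) = \sum_{x,y} \cT(x,y)\, L(h(x), y),
\end{equation*}
which holds by definition of risk. The work is therefore entirely in rewriting $\cT(x,y)$.

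The central chain of equalities is: $\cT(x,y) = \cT(x)\,\cT(y\mid x)$ by the chain rule; then marginalize the privileged variable, $\cT(y\mid x) = \sum_w \cT(w\mid x)\,\cT(y\mid w,x)$; then apply Assumption~\ref{asmp:sufficiency} (sufficiency, $Y\perp X\mid W$ in $\cT$) to drop the conditioning on $x$, giving $\cT(y\mid w,x) = \cT(y\mid w)$; finally apply Assumption~\ref{asmp:covshift} (covariate shift w.r.t.\ $W$, i.e.\ $\cT(Y\mid w)=\cS(Y\mid w)$) to replace $\cT(y\mid w)$ with $\cS(y\mid w)$. Substituting back yields
\begin{equation*}
R_\cT(h) = \sum_x \cT(x) \sum_w \cT(w\mid x) \sum_y \cS(y\mid w)\, L(h(x), y),
\end{equation*}
which is the claimed identity. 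I would remark that all three factors are estimable: $\cT(x)$ and $\cT(w\mid x)$ from the unlabeled/PI target sample $D_\cT$, and $\cS(y\mid w)$ from the labeled source sample $D_\cS$ — this is where Assumption~\ref{asmp:overlap} (overlap in $W$, $\cT(w)>0 \Rightarrow \cS(w)>0$) enters, ensuring $\cS(y\mid w)$ is well defined wherever the outer sums have positive weight, so the restriction of the $w$-sum to $\{w:\cS(w)>0\}$ loses nothing.

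For the minimizer: with $L$ the squared loss, $R_\cT(h)=\sum_x \cT(x)\,\E[(h(x)-Y)^2]$ under the identified law, and since $h(x)$ can be chosen independently at each $x$, it suffices to minimize $\sum_w\sum_y \cT(w\mid x)\cS(y\mid w)\,(h(x)-y)^2$ pointwise in $x$. Setting the derivative in $h(x)$ to zero gives $h(x) = \sum_w \cT(w\mid x)\sum_y \cS(y\mid w)\, y = \sum_w \cT(w\mid x)\,\E_\cS[Y\mid w]$, which is $h_\cT^*$; the objective is convex in $h(x)$, so this stationary point is the global minimizer. I do not expect a serious obstacle here — the only point requiring a little care is the careful handling of supports (the $\{w:\cS(w)>0\}$ restriction and the direction of the overlap implication), making sure that marginalizing over $W$ and then re-expressing via source conditionals does not silently assume overlap in $X$, which is exactly what the proposition is meant to avoid.
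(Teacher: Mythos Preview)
Your proposal is correct and follows essentially the same route as the paper: decompose $\cT(x,y)$ via the chain rule and marginalization over $W$, then apply sufficiency to drop $x$, covariate shift to swap $\cT(y\mid w)$ for $\cS(y\mid w)$, and overlap to justify that this conditional is well defined on the relevant support; the minimizer is obtained from the pointwise first-order condition exactly as you describe. If anything, your handling of the overlap step and the convexity remark are slightly more explicit than the paper's version.
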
%
\begin{proof}%
By definition, $R_\cT(h) = \sum_{x,y}\cT(x, y) L(h(x), y)$. We marginalize over $W$ to get
\begin{align*}
\cT(x, y) & =  \cT(x)\E_{\cT(W\mid x)}\left[\cT(Y\mid W, x) \mid x] \right]  \\ 
& = \cT(x) \E_{\cT(W\mid x)}[\cT(y\mid W) \mid x] \\
& = \cT(x)\sum_{w : \cT(w)>0} \cT(w \mid x) \cS(y\mid w) \\
& = \cT(x)\sum_{w : \cS(w)>0} \cT(w \mid x) \cS(y\mid w)~.
\end{align*}
where the second equality follows by sufficiency and the third by covariate shift and overlap in $W$. $\cT(x), \cT(w \mid x)$ and $\cS(y\mid w)$ are observable through training samples. That $h^*_\cT$ is a minimizer follows from the first-order condition of setting the derivative of the risk with respect to $h$ to 0. This strategy yields the well-known result that
$$
h^*_\cT = \argmin_{h} \E_\cT[(h(X)-Y^2)] = \E_\cT[Y\mid X]~.
$$
By definition and the previous result, we have that
\begin{align*}
\E_\cT[Y\mid X=x] & = \sum_{y} y \frac{\cT(x, y)}{\cT(x)} \\
& = \sum_{y} \sum_{w : \cS(w)>0} \cT(w \mid x) \cS(y\mid w) y \\
& =\sum_{w} \cT(w \mid x) \E_\cS[Y \mid x]
\end{align*}
and we have the result.
\end{proof}

\section{Proof of Proposition 2}
\label{app:proof}

\paragraph{Proposition 2.}
\emph{
Assume that $\cG$ comprises M-Lipschitz mappings from the privileged information space $\mathcal{W} \subseteq \bbR^{d_W}$ to $\cY$. Further, assume that both the ground truth privileged information $W$ and label $Y$ are deterministic in $X$ and $W$ respectively. 
Let $\rho$ be the domain density ratio of $W$ and let Assumptions~\ref{asmp:covshift}--\ref{asmp:sufficiency} (Covariate shift, Overlap and Sufficiency) hold w.r.t. $W$. Further, let the loss $L$ be uniformly bounded by some constant $B$ and let $d$ and $d'$ be the pseudo-dimensions of $\mathcal{G}$ and $\mathcal{F}$ respectively. Assume that there are $n$ observations from the source (labeled) domain and $m$ from the target (unlabeled) domain. Then, with $L$ the squared Euclidean distance, for any $h = h \circ f \in \cG \times \cF$, w.p. at least $1-\delta$,
}
\begin{align*}
\frac{R_\cT(h)}{2} & \leq \hat{R}^{Y,\rho}_\cS(g) +M^2 \hat{R}^W_\cT(f)  \\ 
& + 2^{5/4}\sqrt{d_2(\cT\|\cS)}\sqrt[\frac{3}{8}]{\frac{d\log \frac{2me}{d}+\log \frac{4}{\delta}}{m}} \\
& + d_\cW BM^2 \left(\sqrt{\frac{2d'\log \frac{en}{d'}}{n}}+\sqrt{\frac{\log\frac{d_{\cW}}{\delta}}{2n}} \right).
\end{align*}

\begin{proof}Decomposing the risk of $h \circ \phi$ , we get 
\begin{align*}
& R_\cT(h) = \E_\cT[(g(f(X)) - Y)^2] \\
& \leq 2\E_\cT[(g(W) - Y)^2 + (g(f(X)) - g(W))^2] \\
& \leq 2\E_\cT[(g(W) - Y)^2 + M^2\|f(X)) - g(W)\|^2] \\
& \leq 2\E_\cT[(g(W) - Y)^2] +  2M^2\E_\cT[\|(f(X) - W)\|^2] \\
& = 2R^Y_\cT(g) +  2M^2R^W_\cT(f) =2\underbrace{R^{Y,\rho}_S(g)}_{(I)} +  2M^2\underbrace{R^W_\cT(f)}_{(II)}.
\end{align*}
The first inequality follows from the relaxed triangle inequality, the second inequality from the Lipschitz property and the third equality from Overlap and Covariate shift. We will bound these quantities separately starting with $(I)$. 

 We assume that the pseudo-dimension of $\cG$, $d$ is bounded. Further, we assume that the second moment of the density ratios, equal to the R\'enyi divergence $d_2(\cT \| \cS)=\Sigma_{w\in cG} \cT(w) \frac{\cT(w)}{\cS(w)}$ are bounded and that the density ratios are non-zero for all $w\in \cG$. Let $D_1=\{w_i,y_i\}_{i=0}^{m}$ be a dataset drawn i.i.d from the source domain. Then by application of Theorem 3 from \citet{cortes2010} we obtain with probability $1-\delta$ over the choice of $D_1$,
$$
(I)=R^{Y,\rho}_S(g)\leq   \hat{R}^{Y,\rho}_S(g) + 2^{5/4}\sqrt{d_2(\cT\|\cS)}\sqrt[3/8]{\frac{d\log \frac{2me}{d}+\log \frac{4}{\delta}}{m}}
$$
Now for $(II)$ we treat each component of $w\in \cW$ as a regression problem independent from all the others. So we can therefore write the risk as the sum of the individual component risks
$$
R^W_\cT(f)=\Sigma_{i=1}^{d_\cW} R^W_{\cT,i}(f)
$$
Let the pseudo-dimension of $\cF$ be denoted $d$, $D_2=\{x_i,w_i\}_{i=0}^{n}$ be a dataset drawn i.i.d from the target domain. Then, using theorem 11.8 from \citet{foml_mohri} we have that for any
$\delta>0$, with probability at least $1-\delta$ over the choice of $D_2$, the following inequality holds for all hypotheses $f\in \cF$ for each component risk
\begin{align*}
R^W_{\cT,i}(f)& \leq \hat{R}^W_{\cT,i}(f) + B \left(\sqrt{\frac{2d'\log \frac{en}{d'}}{n}}+\sqrt{\frac{\log\frac{1}{\delta}}{2n}}\right) \\
\end{align*}
We then simply make all the bounds hold simultaneously by applying the union bound and having it so that each bound must hold with probability $1-\frac{\delta}{d_\cW}$ which results in 
\begin{align*}
R^W_\cT(f)&=\Sigma_{i=1}^{d_\cW} R^W_{\cT,i}(f) \leq \Sigma_{i=1}^{d_\cW} \hat{R}^W_{\cT,i}(f) + \Sigma_{i=1}^{d_\cW} B \left(\sqrt{\frac{2d'\log \frac{en}{d'}}{n}}+\sqrt{\frac{\log\frac{d_\cW}{\delta}}{2n}}  \right ) \\ &=\hat{R}^W_{\cT}(f) + d_\cW B \left(\sqrt{\frac{2d'\log \frac{en}{d'}}{n}}+\sqrt{\frac{\log\frac{d_\cW}{\delta}}{2n}} \right)
\end{align*}
Combination of these two results then yield the proposition statement.

Consistency follows as $Y$ is a deterministic function of $W$ and $W$ is a deterministic fundtion of $X$ and both $\cH$ and $\cF$ are well-specified. Thus both empirical risks and sample complexity terms will converge to 0 in the limit of infinite samples. 
\end{proof}

\section{Proof Sketch for PAC-Bayes Bound}

\label{app:pacbayesproof}
We will here detail a proof sketch for a PAC-Bayes version of the bound we propose in the main text. For the purposes of this bound we will consider the quantity $\E_{h\sim \psi}R_\cT(h)$, where $\psi$ is a posterior distribution over classifiers $h\sim \psi$. As we are basing the bound on the two-step methodology where we train two different classifiers on separate datasets we assume that we can obtain the posteriors over the component functions separately and independently i.e. $h=f\circ g\sim \psi=\psi_f\times \psi_g$, where $f\sim \psi_f$ and $g\sim \psi_g$.
Let the assumptions from proposition 2 hold here.
Similar to the previous section we decompose the risk into two parts
\begin{align*}
& \E_{h\sim \psi}R_\cT(h) = \E_{h\sim \psi}\E_\cT[(g(f(X)) - Y)^2] \\
& = \E_{h\sim \psi}[2R^Y_\cT(g) +  2M^2R^W_\cT(f) ]=2\E_{h\sim \psi}\underbrace{R^Y_\cT(g)}_{(I)} +  2M^2\E_{h\sim \psi}\underbrace{R^W_\cT(f)}_{(II)}.
\end{align*}
We note that since we now have expectations over the composite function $h$ on expressions which depend on only one of the components we can, for example, write the following:
 $$
 \E_{h\sim \psi}R^Y_\cT(g) = \E_{g\sim \psi_g}R^Y_\cT(g)
 $$ 
  This holds as we assume that f and g are not dependent on each other. Therefore, we can just marginalize out the part which is not in use. From this point we can use some of the available bounds from the literature to estimate the resulting part e.g. Corollary 1 from \citet{breitholtz2022practicality}. 
 Application of this result yields the following bound on the first term
 $$
 \underset{g \sim \psi_g}{\mathbb{E}} R^Y_\mathcal{T}(g) \leq  \frac{1}{\gamma} \underset{g \sim \psi_g}{\mathbb{E}} \hat{R^{Y,\rho}_\mathcal{S}}(g) + \beta_\infty\frac{\text{KL}(\psi_g \Vert \pi_g)+ \ln(\frac{1}{\delta})}{2\gamma(1-\gamma)m}~.
 $$
 Thereafter we can use another bound from the literature to estimate the second term, e.g. Theorem 6 from \citet{germain_pac-bayes_2020}. Using this we obtain the following:
 $$
  \underset{f \sim \psi_f}{\mathbb{E}} R^W_\cT(f) \leq \frac{\alpha}{1-e^{-\alpha}}\left (\underset{f \sim \psi_f}{\mathbb{E}} \hat{R}^W_\cT(f) +  \frac{\text{KL}(\psi_f \Vert \pi_f)+\ln(\frac{1}{\delta})}{n\alpha} \right )~.
$$
Then a bound can be constructed by combining these two results using a union bound argument.
\begin{align*}
\E_{h\sim \psi}R_\cT(h) &\leq \frac{2}{\gamma} \underset{g \sim \psi_g}{\mathbb{E}} \hat{R^{Y,\rho}_\mathcal{S}}(g) 
+ \beta_\infty\frac{\text{KL}(\psi_g \Vert \pi_g)+ \ln(\frac{2}{\delta})}{2\gamma(1-\gamma)m} \\
&+ \frac{2M^2\alpha}{1-e^{-\alpha}}\left (\underset{f \sim \psi_f}{\mathbb{E}} \hat{R}^W_\cT(f) +  \frac{\text{KL}(\psi_f \Vert \pi_f)+\ln(\frac{2}{\delta})}{n\alpha} \right )
\end{align*}

\section{A Bound on the Target Risk Without Suffiency}
\label{app:no_sufficiency}
The sufficiency assumption is used to replace $\cT(y \mid x)$ with $\cT(y \mid w)$ in the proof of Proposition~\ref{prop:identification}. If sufficiency is violated but it is plausible that the degree of insufficiency is comparable across domains, we can still obtain a bound on the target risk which may be estimated from observed quantities. One way to formalize such an assumption is that there is some $\gamma \geq 1$, for which  
\begin{equation}\label{eq:weak_sufficiency}
\sup_{x\in \cT(x \mid w)} \cT(y\mid w, x)/\cT(y \mid w) \leq \gamma \sup_{x\in \cS(x \mid w)} \cS(y\mid w, x)/\cS(y \mid w) 
\end{equation}
This may be viewed as a relaxation of suffiency. If Assumption~\ref{asmp:sufficiency} holds, both left-hand and right-hand sides of the inequality are 1. Under \eqref{eq:weak_sufficiency}, with $\Delta_\gamma(w,y)$ equal to the right-hand side the inequality, 
$$
R_\cT(h) \leq \sum_{x} \cT(x) \sum_w \cT(w\mid x) \sum_{y} \Delta_\gamma(w,y) \cS(y\mid w) L(h(x), y)~.
$$
However, the added assumption is not verifiable statistically.

\end{document}